\newtheorem{theorem}{Theorem}
\newtheorem{lemma}{Lemma}
\newtheorem{corollary}{Corollary}
\theoremstyle{definition}
\newtheorem{definition}{Definition}
\theoremstyle{remark}
\newtheorem{remark}{Remark}
\newcommand{\N}{\mathbb{N}}
\newcommand{\R}{\mathbb{R}}
\renewcommand{\d}{\textnormal{d}}
\newcommand{\I}{\mathbb{I}}
\renewcommand{\d}{\textnormal{d}}
\renewcommand{\O}{\mathcal{O}}
\renewcommand{\H}{\mathcal{H}}
\newcommand{\X}{\mathcal{X}}
\newcommand{\T}{\mathcal{T}}
\begin{document}
%
\title{Precise Change Point Detection using \\ Spectral Drift Detection\thanks{Funding in the frame of the BMBF project TiM, 05M20PBA, and from the VW-Foundation for the project \textit{IMPACT} is gratefully acknowledged.}}
\author{Fabian Hinder$^1$, Andr\'e Artelt$^1$\footnote{Affiliation with the University of Cyprus}, Valerie Vaquet$^1$ and Barbara Hammer$^1$
%
%
%
\vspace{.3cm}\\
%
Bielefeld University - Cognitive Interaction Technology (CITEC)  \\
Inspiration 1, 33619 Bielefeld - Germany
%
}
\maketitle              
\begin{abstract}
The notion of concept drift refers to
the phenomenon that the data generating distribution
changes over time;
as a consequence machine learning models may become inaccurate and need adjustment.
In this paper we consider the problem of detecting those change points in unsupervised learning.
Many unsupervised approaches 
rely on 
the discrepancy between the sample distributions of two time windows. This procedure is noisy for small windows, hence prone to induce false positives and not able to deal with more than one drift event in a window.
In this paper we rely on structural properties of drift induced signals, which use spectral properties of
kernel embedding of distributions.
Based thereon we derive a new unsupervised drift detection algorithm, investigate its mathematical properties,
and demonstrate its usefulness in several experiments.

{\textbf{Keywords:} Concept Drift $\;\cdot\;$ Drift Detection $\;\cdot\;$ Data Streams $\;\cdot\;$ Non-Parametric Methods $\;\cdot\;$ Kernel Methods $\;\cdot\;$ Spectral Clustering.}
\end{abstract}

\section{Introduction}
One  fundamental assumption in classical machine learning is the fact that observed data are i.i.d.\ according to some unknown probability $P_X$, i.e.\ the data generating process is stationary. 
Yet, this assumption is often violated 
as soon as machine learning faces
data from the real world such as social media entries or measurements of IoT devices, which are subject to continuous change
\cite{DBLP:journals/adt/BifetG20,DBLP:journals/widm/TabassumPFG18a}.
Here, concept drift, i.e.\ changes of the underlying distribution $P_X$, can be caused by seasonal changes, changed demands, ageing of sensors, etc.

The presence of drift can be 
modeled in different ways.
Covariate shift  refers to the  situation of  training and test set having different marginal distributions \cite{5376}. Learning from data streams extends this setting to an unlimited but usually countable stream of observed data 
\cite{asurveyonconceptdriftadaption}.
Here, one distinguishes between virtual and real drift, i.e.\ non-stationarity of the marginal distribution only or also the posterior. Learning technologies for such situations often rely on windowing techniques, and adapt the model based on the characteristics of the data in an observed time window. Active methods explicitly detect drift, usually referring to drift of the classification error, and trigger model adaptation this way, while passive methods continuously adjust the model \cite{DBLP:journals/cim/DitzlerRAP15}. 

In recent years, quite a few approaches on how to deal with concept drift were proposed \cite{DBLP:journals/cim/DitzlerRAP15,Lu_2018}.
They range from non-parametric methods over gradient techniques to ensemble technologies for dealing with streaming data \cite{DBLP:journals/inffus/KrawczykMGSW17}.
The precise pinpointing of the time point of a drift event, commonly referred to as drift detection, constitutes an important sub-problem which 
is used in active learning methods as well as approaches to analyse and understand potentially drifting data streams.
A large number of drift detection methods exist, which partially also  characterize the overarching type of drift
\cite{Aminikhanghahi:2017:SMT:3086013.3086037,DBLP:journals/kais/GoldenbergW19}. 
Many approaches deal with supervised scenarios, aiming for a small interleaved train-test error, and first approaches
for their explanation
\cite{DBLP:journals/corr/WebbLPG17}. As drift can induce problems in all kinds of information analysis and processing tasks where label information is not  available~\cite{D3}, we  focus on the unsupervised case for drift detection.
%
%
Many unsupervised drift detection schemes rely on a comparison of distributions taken from two consecutive time windows \cite{DBLP:journals/kais/GoldenbergW19}.
These techniques can easily lead to false positives; moreover, they are not capable of dealing with consecutive drift events in close  proximity
\cite{SSCI}. 

In this contribution, we aim for a different unsupervised drift detection technology which relies on spectral properties of diverging probability distributions rather than their direct comparison based on a distance measure. 
In targeting this aim, our contribution is twofold:
(1) We provide mathematical insights which enable us to link the presence of drift in a data stream to the spectral structure of the induced kernel matrix. (2) Based on this insight, we propose a new robust drift detection algorithm which is capable of reliably detecting drift events in a given data stream. The approach is based on a spectral analysis of kernel matrices and links the problem of drift detection to spectral clustering. As a result we obtain a very precise estimation of the time point of multiple drift events as we will demonstrate in a number of experiments.

The remainder of this work is structured as follows: first (Section~\ref{sec:setup}), we recall the relevant work from literature and define the problem setup. In Section~\ref{sec:theo} we derive the underlying theory that establishes the regularity properties of the drift induced signals and show its connection to spectral theory (Theorem~\ref{thm:main}). Thereon we derive an efficient drift detection algorithm (Section~\ref{sec:SDD}). In the last section (Section~\ref{sec:exp}) we empirically evaluate the usefulness of our approach for the problem of drift detection. 

\section{Problem Setup and Related Work}
\label{sec:setup}

In the usual, time invariant setup of machine learning, one considers a generative process $P_X$, i.e. a probability measure, on $\R^d$. In this context, one views the realizations of $P_X$-distributed random variables $X_1,...,X_n$ as samples.
Depending on the objective, learning algorithms try to infer the data distribution based on these samples, or, in the supervised setting, the posterior distribution. We will not distinguish between these settings and only consider distributions in general,
subsuming supervised and unsupervised modeling \cite{DBLP:journals/corr/WebbLPG17}.

Many processes in real-world applications are time dependent, so it is reasonable to incorporate time into our considerations. One prominent way to do so, is to consider an index set $\T$, representing time, and a collection of probability measures $p_t$ on $\R^d$, indexed over $\T$, which may change over time \cite{asurveyonconceptdriftadaption}. We will usually assume $\T \subset \R$.
Drift refers to the fact that $p_t$ varies for different time points, i.e.\ 
\begin{align*}
    \exists t_0,t_1 \in \T : p_{t_0} \neq p_{t_1}.
\end{align*}
In this context, we consider a sequence of samples $(X_1,T_1),(X_2,T_2),...$, with $X_i \sim p_{T_i}$ and $T_i \leq T_{i+1}$, as a stream.
Notice, that we will usually use the shorthand notion drift instead of concept drift.
More formally we define:
\begin{definition}
Let $\T \subset \R$ be an open (time) interval, and let $\X$ be the (measurable) data space. 
A \emph{drift process} on $\X$ over $\T$ is a map that assigns a distribution $p_t$ on the data space for every time point $t \in \T$ such that 
$t \mapsto p_t(A)$ is a measurable map for all measurable $A \subset \X$, i.e. a Markov kernel from $\T$ to $\X$. We say that $p_t$ has \emph{drift} if 
$
    \{ (s,t) \in \T^2 \:|\: p_t \neq p_s \}
$
is not a Lebesgue null-set. 

We say that $p_t$ has \emph{abrupt drift only} if there exist time points $t_1 < \dots < t_n, \; t_i \in \T$, called \emph{change points}, and probability measures $P_0,P_1,\dots,P_n$ on $\X$, called \emph{concepts}, such that $P_i \neq P_{i+1}$ and
\begin{align*}
    p_t = \begin{cases}
       P_0 & t < t_1 \\
       P_1 & t \in [t_1,t_2) \\
       \:\:\vdots \\
       P_n & t \geq t_n \\
    \end{cases},
\end{align*}
up to a Lebesgue null-set.
\end{definition}

A relevant problem is to detect drift, i.e.\ identify all time points $t$ such that $p_t$ and $p_{t + \Delta}$ differ, for some small time interval $\Delta > 0$. 
If the underlying distributions are known, this can be done using a metric, i.e. 
\begin{align*}
t \mapsto d(p_t,p_{t+\Delta}).
\end{align*}
We are interested in characterizing the shape of this function in empirical observations. As it will turn out, this will provide important information regarding the drift of the underlying system, which enables a robust localization of the change point in time from observed data. Furthermore, if we take the derivative at $\Delta = 0$ this is closely related to the rate of change or drift speed of the stream. As we will show, this is one of the most relevant quantities in drift detection, as many drift detection schemes rely on it in one way or another.

As $p_t$ are usually not known, most drift detection algorithms which have been proposed in the literature are based on the
comparison of sample characteristics of two time windows \cite{DBLP:journals/kais/GoldenbergW19}. The samples within those time windows are then used to estimate the momentary discrepancies between the current underlying distribution of the stream and a reference time frame.
Popular drift detection methods differ in the choice of distance measures, their transfer into a statistics,  and the allocation of samples to the windows \cite{Lu_2018}. 
Here, we will mainly consider the case of two consecutive windows, that are both sliding along the stream. In the literature, a  common alternative is to use only one sliding window and compare this to a  reference sample; this setup enables the detection of slow drift as well, but it requires a strategy how to chose the  reference window: Depending on the specific  implementation, this may lead to a refractory state after having detected a drift,
where reference samples need to be collected  before being capable of detecting the next drift. Notice that the subsequent statements also take more general windowing techniques into account. 
However, fixed and sliding windows behave in a similar way
for the case of abrupt drift, in which we are interested.

The authors of \cite{DBLP:journals/corr/WebbLPG17} provided a quantification scheme for drift -- drift magnitude --  defined as 
\begin{align*}
    \sigma_{d,l,p_\cdot}(t) = d(p_{[t-2l,t-l]}, p_{[t-l,t]}) 
\end{align*}
where $p_{[a,b]}$ denotes the mean distribution in the time interval $[a,b]$ and $d$ is a distance measure. 
Notice, that this is the specific instantiation of the original definition by \cite{DBLP:journals/corr/WebbLPG17},  which focuses on consecutive time windows. Assuming the samples arrive with equidistant delay, we may estimate $\sigma$ based on the last $2n_l$ samples, where $n_l$ is the number of samples in a time period of length $l$. 
In \cite{DBLP:journals/corr/WebbLPG17} $d$ is chosen as the total variation norm, in \cite{HDDM} Hellinger distance is used, \cite{fail} and \cite{SSCI} use the Maximum Mean Discrepancy~(MMD)~\cite{MMD}, \cite{kdq} use Kullback-Leibler divergence, \cite{KSWIN} use the (feature wise) maximum-norm of the cdfs, and \cite{D3} use classification accuracy of a linear model, which is thus related to the total variance and MMD (with linear kernel).

In \cite{SSCI} the theoretical properties of $\sigma$ are analyzed. The main results are that  $\sigma$ is indeed able to identify drift for a valid distance function by taking on values larger than zero, which also points out the point in time. However, those effects are usually spread out in time. 
Although the work \cite{SSCI} identifies  a characteristic shape of this function in case of isolated, abrupt drift events, a theoretical description of the shape in general is still an open question. 

In the next section we  investigate this problem for the special case of kernel induced metric MMD. We derive a formula to describe the shape of $\sigma$ for general drift processes and show its connection to the problem of measuring the rate of change and, in the case of abrupt drift events, spectral clustering. 

\section{The Shape of the Drift Magnitude}
\label{sec:theo}

The drift magnitude has been introduced to examine the rate of change of a single or multiple features. We will focus on the case where all features are considered at once, resulting in a single time series per data stream. We will focus on Maximum Mean Discrepancy~\cite{MMD} as distance measure, which is defined as 
\begin{align*}
    \textnormal{MMD}(P,Q) = \Vert \mu_P - \mu_Q \Vert_\H,
\end{align*}
where $\mu_P$ and $\mu_Q$ are  kernel embeddings of $P$ and $Q$, respectively, and $\Vert \cdot \Vert_\H$ is the norm in the associated Hilbert-space.

Recall that a kernel $k$ over a space $\X$ is a map \mbox{$k : \X \times \X \to \R$} 
that factors through the inner product of a Hilbert  $\H$ of functions from $\X$ to $\R$, such that $k(x,\cdot) \in \H$ and $\langle k(x,\cdot), f \rangle_\H = f(x)$ for all $f \in \H$. In particular, $k(x,y) = \langle k(x,\cdot), k(y,\cdot) \rangle_\H$; here $k(x,\cdot)$ is usually denoted by $\varphi(x)$ and called the feature map. Note, that $\H$ is uniquely determined by this property. Furthermore, given a measure $P$ on $\X$ we can embed it into $\H$ by integrating $\varphi$ with respect to $P$, i.e. $\mu_P := \varphi(P) := \int \varphi(x) \d P(x)$. If $k$ is universal, i.e. every continuous function can be approximated by functions in the subspace of $\H$ spanned by $\varphi$, then $\Vert P - Q \Vert_k = \Vert \mu_P - \mu_Q \Vert_\H$ defines a norm on the set of all finite measures, which is known as Maximum Mean Discrepancy (MMD).
 
Since we are mainly interested in the embedding of $p_t$ for different values of $t$, we define a map $K : \T \times \T \to \R$ that resembles a kernelized version of the auto-correlation of $p_t$:
\begin{align*}
    K_{t,s} := \langle \mu_{p_t}, \mu_{p_s} \rangle.
\end{align*}
Notice, that integrals commute with the inner product so that it makes sense to consider the mean kernel auto-correlation $K_{w,w'}$ over time-windows $w$ and $w'$. 
Furthermore, as MMD is the norm of a Hilbert space, we can express it for different time points using the kernel auto-correlation:
$
    \Vert p_t - p_s \Vert_k^2 = K_{t,t} - 2K_{t,s} + K_{s,s},
$ 
which again holds for time windows, too.
There is a close connection between the auto-correlation and the kernel matrix: 
\begin{corollary}
Let $X_1,\cdots,X_n \sim p_w$ i.i.d. and $Y_1,\cdots,Y_m \sim p_{w'}$ i.i.d. be samples drawn from the stream during the time windows $w$ and $w'$, with $X_i$ and $Y_j$ independent unless $X_i=Y_j$ 
then it holds
\begin{align*}
    \frac{1}{nm}\sum_i^n \sum_j^m k(X_i,Y_j) \xrightarrow{n,m \to \infty} K_{w,w'},
\end{align*}
in probability with rate $\O(\min\{m,n\}^{-1/2})$, assuming $k$ is bounded.
\end{corollary}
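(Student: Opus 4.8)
The plan is to treat the double sum $S := \frac{1}{nm}\sum_{i=1}^n \sum_{j=1}^m k(X_i,Y_j)$ as a two-sample V-statistic and to establish the claim through a bias--variance decomposition followed by Chebyshev's inequality. The first step is to identify the target as an expectation: since integration commutes with the inner product (as already used in defining $K_{w,w'}$), for \emph{independent} $X\sim p_w$ and $Y\sim p_{w'}$ one has $\E[k(X,Y)] = \E[\langle\varphi(X),\varphi(Y)\rangle_\H] = \langle \mu_{p_w},\mu_{p_{w'}}\rangle = K_{w,w'}$. Thus every summand whose two arguments are independent is an unbiased estimate of $K_{w,w'}$, and it remains to control the deviation of the average from this mean.

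For the bias I would isolate the summands for which $X_i$ and $Y_j$ fail to be independent, i.e.\ those with $X_i = Y_j$ arising from window overlap; by the matching structure there are at most $\min\{m,n\}$ such pairs. Using boundedness of $k$, each contributes a deviation of at most $2\|k\|_\infty$ to $\E[S]$, so that $|\E[S] - K_{w,w'}| \le \frac{2\|k\|_\infty \min\{m,n\}}{nm} = \O(\max\{m,n\}^{-1})$, which is dominated by the claimed rate $\O(\min\{m,n\}^{-1/2})$.

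The main work, and the principal obstacle, is the variance bound. I would expand $\Var(S) = \frac{1}{n^2m^2}\sum_{i,j,i',j'}\mathrm{Cov}\big(k(X_i,Y_j),\,k(X_{i'},Y_{j'})\big)$ and observe that a covariance vanishes whenever the variables indexed by $(i,j)$ share none with those indexed by $(i',j')$. Counting by overlap type and bounding each surviving covariance by $4\|k\|_\infty^2$: the $\O(nm)$ fully coincident terms contribute $\O((nm)^{-1})$; the $\O(nm^2)$ terms sharing the $X$-index contribute $\O(n^{-1})$; the $\O(n^2m)$ terms sharing the $Y$-index contribute $\O(m^{-1})$; and the remaining nonzero covariances, which can only stem from the rare coincidences $X_i = Y_{j'}$ induced by window overlap, are of strictly lower order by the same $\O(\min\{m,n\})$ matching count. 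Hence $\Var(S) = \O(\min\{m,n\}^{-1})$, the delicate point being the careful enumeration of which index patterns yield nonzero covariance under the ``independent unless $X_i=Y_j$'' dependency structure.

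Finally I would combine the two estimates via $|S - K_{w,w'}| \le |S - \E[S]| + |\E[S] - K_{w,w'}|$. Chebyshev's inequality applied to the first term, together with $\Var(S) = \O(\min\{m,n\}^{-1})$, shows that $|S - \E[S]|$ is of order $\min\{m,n\}^{-1/2}$ in probability, while the bias term is of smaller order. This yields convergence in probability at the stated rate $\O(\min\{m,n\}^{-1/2})$.
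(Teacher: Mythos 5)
Your argument is correct. A point of comparison worth knowing: the paper never actually proves this corollary --- despite the announcement that all proofs are contained in the supplemental material, the appendix contains proofs only of Lemma~1, the two theorems, and an auxiliary convolution lemma; the corollary is implicitly treated as a standard consequence of the identity $K_{w,w'}=\langle\mu_{p_w},\mu_{p_{w'}}\rangle_\H$ together with known concentration results for empirical MMD-type statistics from the cited literature. Your bias--variance decomposition with Chebyshev is precisely the standard argument being appealed to, so your proof fills a gap the paper leaves open rather than duplicating or diverging from an existing one. All four steps are sound: unbiasedness of the independent summands follows from Fubini/Bochner integrability, which boundedness of $k$ guarantees; the matched pairs number at most $\min\{m,n\}$, giving a bias of $\O(\max\{m,n\}^{-1})$; the covariance enumeration gives $\Var(S)=\O(n^{-1}+m^{-1}+(nm)^{-1}+\max\{m,n\}^{-1})=\O(\min\{m,n\}^{-1})$; and Chebyshev converts this into the claimed rate in probability. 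One small wording correction: the cross-matched covariance terms (those with $X_i=Y_{j'}$, $i\neq i'$, $j\neq j'$) number at most $2\min\{m,n\}\cdot nm$ and hence contribute $\O(\max\{m,n\}^{-1})$ to the variance; this is \emph{dominated by} the main terms but not always of \emph{strictly lower order} --- when $m\asymp n$ it is of the same order as $\min\{m,n\}^{-1}$. This does not affect the conclusion, since the total remains $\O(\min\{m,n\}^{-1})$.
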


The notion of the kernel auto-correlation will allow us to derive a general formula for the shape of the drift magnitude $\sigma$. However, before we do so, notice that instead of using two sliding windows we can actually use any arbitrary weighting scheme:

\begin{definition}
Let $\T' \subset \T$ be a measurable set. A \emph{weighting scheme} is a finite, signed stochastic kernel $w$ from $\T'$ to $\T$, i.e. $w_t$ is a finite signed measure on $\T$ for all $t \in \T'$, and the map $t \mapsto w_t(A)$ is measurable for all measurable sets $A \subset \T$.
\end{definition}

Notice, that general weighting schemes subsume combinations of fixed, sliding, and growing reference windows with sliding window. 
The special case of two sliding windows corresponds to the weighting scheme induced by the convolution with some function $w_0$, i.e.
$
    w_t(A) \mapsto \int_A w_0(s-t) \d s.
$ 
By setting $w_0 = \I_{[-l,0]} - \I_{[-2l,-l]}$ we obtain the sliding window of length $l$, i.e.
\begin{align*}
    \left\Vert  w_0 * p_t \right\Vert_k^2 := \left\Vert \int p_s w_0(s-t) \d s \right\Vert_k^2 =  \sigma_{\Vert \cdot \Vert_k, l, p_\cdot}(t).
\end{align*}
As it turns out a convolution weighting scheme can be particularly well analyzed as 
 we will see later on in Theorem~\ref{thm:main}.
Before we turn to this special case we consider the general notion of a weighting scheme, as it allows us to derive the following lemma which is applicable to a large variety of drift detection schemes:
(Due to space restrictions all proofs in  this article are contained in the supplemental material.)

\begin{lemma}
\label{lem:commute}
Let $p_t$ be a drift process, $w$ be a weighing scheme, and $k : \X \times \X \to \R$ be a bounded kernel. Assume $\X \subset \R^d$ is compact. Then the eigenfuctions $c_i : \X \to \R$ and eigenvalues $\lambda_i \in \R_{\geq 0}, \; i = 1,\cdots$ of $K_{t,s}$ always exist. Denote by $f_i(t) = \int c_i(x) \d p_t(x)$. Then it holds
\begin{align*}
    \sum_{i = 1}^\infty \lambda_i \left(\int f_i(s) \d w_t(s) \right)^2 &= \left\Vert \int p_s \d w_t(s) \right\Vert_k^2 & \text{in $L^2$.}
\end{align*}
\end{lemma}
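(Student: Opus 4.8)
The plan is to reduce the identity to a Mercer-type spectral decomposition of $k$ on the compact space $\X$ and then interchange the resulting spectral sum with the spatial and temporal integrals. First I would fix a finite reference measure $\mu$ on $\X$ and consider the integral operator $T_k f(x) = \int k(x,y) f(y)\,\d\mu(y)$ on $L^2(\X,\mu)$. Since $\X$ is compact and $k$ is bounded, $k \in L^2(\mu \otimes \mu)$, so $T_k$ is Hilbert--Schmidt and hence compact; being the operator of the symmetric, positive semidefinite kernel $k$ (it factors through an inner product, so $\iint k(x,y) c(x) c(y)\,\d\mu\,\d\mu \geq 0$) it is self-adjoint and positive. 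The spectral theorem for compact self-adjoint operators then yields an orthonormal system of eigenfunctions $c_i \in L^2(\X,\mu)$ with eigenvalues $\lambda_i \geq 0$, establishing the existence claim, together with the expansion $k(x,y) = \sum_i \lambda_i c_i(x) c_i(y)$. I would note that this simultaneously diagonalizes the auto-correlation, since
\begin{align*}
    K_{t,s} = \iint k(x,y)\,\d p_t(x)\,\d p_s(y) = \sum_i \lambda_i f_i(t) f_i(s), \qquad f_i(t) = \int c_i\,\d p_t,
\end{align*}
which is what justifies calling $\lambda_i$ the spectral data ``of $K_{t,s}$''.

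Next, writing $q_t := \int p_s\,\d w_t(s)$ for the signed measure produced by the weighting scheme at time $t$, I would expand
\begin{align*}
    \left\Vert \int p_s\,\d w_t(s) \right\Vert_k^2 = \langle \mu_{q_t}, \mu_{q_t} \rangle_\H = \iint k(x,y)\,\d q_t(x)\,\d q_t(y),
\end{align*}
substitute the expansion of $k$, and interchange the infinite sum with the double integral to reach $\sum_i \lambda_i \left(\int c_i\,\d q_t\right)^2$. A second interchange, now between the spatial integral against $q_t$ and the temporal integral against $w_t$, gives
\begin{align*}
    \int c_i(x)\,\d q_t(x) = \int \left( \int c_i(x)\,\d p_s(x) \right) \d w_t(s) = \int f_i(s)\,\d w_t(s),
\end{align*}
so each summand becomes exactly $\lambda_i \left(\int f_i\,\d w_t\right)^2$, matching the claimed left-hand side. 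Equivalently, and more cleanly for bookkeeping, one can work entirely in $\H$: setting $e_i := \sqrt{\lambda_i}\,c_i$, the reproducing property gives $\langle \mu_{q_t}, e_i \rangle_\H = \sqrt{\lambda_i} \int f_i\,\d w_t$, and Parseval against the orthonormal system $\{e_i\}$ reproduces the identity.

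Both interchanges are justified by Fubini--Tonelli together with the boundedness of $k$, the finiteness (total variation) of $w_t$, and the fact that the $p_s$ are probability measures; these also yield a uniform bound on $\Vert \mu_{q_t} \Vert_\H$ and absolute summability via $\sum_i \lambda_i \left(\int f_i\,\d w_t\right)^2 = \sum_i \langle \mu_{q_t}, e_i \rangle_\H^2 \leq \Vert \mu_{q_t} \Vert_\H^2$. The delicate point, and the reason the equality is stated only in $L^2$ over $t \in \T'$, is that without continuity of $k$ Mercer's expansion converges merely in $L^2(\mu \otimes \mu)$ rather than pointwise, and the Parseval identity is an equality only when $\mu_{q_t}$ has no component orthogonal to $\overline{\operatorname{span}}\{e_i\}$. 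The cleanest remedy is to show that the linear map sending a symmetric $L^2(\mu \otimes \mu)$-kernel $h$ to the function $t \mapsto \iint h\,\d q_t\,\d q_t$ is bounded into $L^2(\T',\rho)$ for a suitable reference measure $\rho$ on $\T'$, and then apply it to the partial sums, whose $L^2(\mu \otimes \mu)$-limit is $k$. I expect establishing this continuity, together with the completeness of the eigensystem relative to the $\mu_{q_t}$, to be the main obstacle, since the $q_t$ need not be absolutely continuous with respect to $\mu$; resolving it will likely require choosing $\mu$ adapted to the process (built from the $p_s$ themselves) so that the $q_t$ are uniformly controlled by $\mu$ and the eigenfunctions see the full relevant support.
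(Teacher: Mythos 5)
Your proposal follows essentially the same route as the paper's proof: a Mercer-type eigendecomposition of $k$ on the compact space $\X$, substituted into $\Vert \int p_s \,\d w_t(s)\Vert_k^2$, followed by an interchange of the spectral sum with the four integrals and a factorization of the double integral into a product. The paper justifies the interchange exactly as you do (absolute summability of the $\lambda_i$ together with boundedness of $k$, hence of the $c_i$), so your first two paragraphs, including the Parseval reformulation via $e_i = \sqrt{\lambda_i}\,c_i$, reconstruct the intended argument.

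The difference lies in your final paragraph, and your instinct there is correct: the issue you flag is real, and the paper does not address it at all --- its proof simply ``applies Mercer's theorem'' to a kernel that is only assumed bounded, whereas Mercer's theorem needs continuity (plus a full-support reference measure) to upgrade $L^2(\mu\otimes\mu)$ convergence of the expansion to pointwise convergence. Without that, the expansion holds only $\mu\otimes\mu$-almost everywhere, and integrating it against $p_s \otimes p_{s'}$, which need not be absolutely continuous with respect to $\mu\otimes\mu$, is not justified. The problem is not cosmetic: take $\X=[0,1]$, the bounded, positive definite, discontinuous kernel $k(x,y)=\I_{\{x=y\}}$, and an abrupt drift between two point masses $\delta_a$ and $\delta_b$. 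With Lebesgue reference measure the integral operator of $k$ is the zero operator, so the left-hand side vanishes identically, while $\Vert w * p_t\Vert_k^2 = \alpha_t^2 + \beta_t^2 > 0$ near the change point (writing $w*p_t = \alpha_t\delta_a + \beta_t\delta_b$); the claimed identity fails even in $L^2$. Your proposed remedy is exactly the right one: either strengthen the hypothesis to $k$ continuous, so genuine Mercer convergence applies and the reference measure is irrelevant, or choose $\mu$ adapted to the process, e.g.\ $\mu = \int_\T p_s \,\d s$, which in the example above places atoms at $a$ and $b$ and restores the identity. In short: same approach, but your version is the more careful one, and the ``main obstacle'' you name is a genuine gap in the paper's own argument rather than a defect of your proposal.
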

\begin{toappendix}
\begin{proof}[Lemma~\ref{lem:commute}]
As $k$ is a kernel on a compact space we may apply Mercers theorem to obtain a representation by means of the eigenfunctions of $k$:
\begin{align*}
    k(x,y) = \sum_{i = 1}^\infty \lambda_i c_i(x) c_i(y),
\end{align*}
with $\lambda_i$ absolutely summable. 
The lemma is now a simple computation:
\begin{align*}
    \left\Vert \int p_s \d w_t(s) \right\Vert_k^2 
    &= \left\langle \iint \varphi(x) \d p_s(x) \d w_t(s), \iint \varphi(y) \d p_{s'}(y) \d w_t(s') \right\rangle 
    \\&= \iiiint \underbrace{\left\langle \varphi(x), \varphi(y) \right\rangle}_{k(x,y)} \d p_s(x) \d w_t(s) \d p_{s'}(y) \d w_t(s')
    \\&= \iiiint \sum_{i = 1}^\infty \lambda_i  c_i(x) c_i(y) \d p_s(x) \d w_t(s) \d p_{s'}(y) \d w_t(s')
    \\&\overset{!}{=} \sum_{i = 1}^\infty \lambda_i \iiiint c_i(x) c_i(y) \d p_s(x) \d w_t(s) \d p_{s'}(y) \d w_t(s')
    \\&= \sum_{i = 1}^\infty \lambda_i \int\!\!\!\underbrace{\int c_i(x) \d p_s(x)}_{= f_i(s)} \d w_t(s) \cdot \int\!\!\!\underbrace{\int c_i(y) \d p_{s'}(y)}_{=f_i(s')} \d w_t(s')
    \\&= \sum_{i = 1}^\infty \lambda_i \left(\int f_i(s) \d w_t(s) \right)^2,
\end{align*}
whereby $!$ holds since $\lambda_i$ is absolute summable and $k$ and thus $c_i$ are bounded.
\end{proof}
\end{toappendix}

Thus, many algorithms consider squared, overlaying, weighted functions that are derived from the drift process. Therefore, it is reasonable to assume that  intricate patterns that occur in drift induced signals are a result of the used weighting; indeed, in~\cite{SSCI} it is shown that  abrupt drift events are distributed in time by the drift magnitude. In the following we will make this more explicit by restricting ourselves to the case of a convectional weighting scheme and consider drift 
in the formal framework of (generalized) derivatives.

\subsection{The Shape of the Drift Magnitude for Sliding Windows}

Notice, that Lemma~\ref{lem:commute} shows a tight connection between general drift and drift that is induced by mixing components, which refers to the fact that there exist probability measures $P_1,\dots,P_n$ and functions $f_1,\dots,f_n : \T \to \R_{\geq 0}$ with $\sum_i f_i(t) = 1$ such that 
\begin{align*}
    p_t = \sum_{i = 1}^n f_i(t) P_i.
\end{align*}
In the kernel space, the ``components'' $P_i$ are provided by the eigenfunctions $c_i$ and, since eigenfunctions are orthonormal, the roles of the $f_i$ coincide. Mixing components are interesting as they are easy to analyze: If we denote by $w_l$ and $h_l$ the windowing and shape function \cite[Theorem 1]{SSCI}, respectively, then it holds
\begin{align*}
    \sigma_{\Vert \cdot \Vert_k,l,p_\cdot}(t) = \left\Vert \sum_{i = 1}^n w_l*f_i(t) P_i \right\Vert_k^2 = \sum_{i,j}^n (h_l * f_i')(t) \cdot (h_l * f_j')(t) \langle \mu_{P_i}, \mu_{P_j} \rangle_\H.
\end{align*}
As we will see, this idea carries over to general drift processes, assuming we make use of MMD as metric. Indeed, it is the most fundamental quantity in the context of drift detection using sliding windows.
To make this precise we need to generalize the notion of differentiability to drift processes. Intuitively, a drift process is differentiable if any derived quantity depends differentiable on time. Formally we define:

\begin{definition}
Let $p_t$ be a drift process.
We say that $p_t$ is \emph{(distributional/weak) differentiable} if $t \mapsto p_t(A)$ is (distributional/weak)\footnote{We recall the relevant definitions in the supplement.}
differentiable for all measurable sets $A \subset \X$.
\end{definition}
\begin{toappendix}
\subsection{Distributions}
\label{sec:dist}
\begin{definition}
Let $\Omega \subset \R^d$ an open, not empty subset. Denote by $\mathcal{D}(\Omega)$ the space of all test functions, i.e. smooth functions with compact support, equipped with the topology such that $\phi_n \to \phi$ in $\mathcal{D}$ if and only if there exists a compact set $K$ such that $\textnormal{supp } \phi_i \subset K,\; \textnormal{supp } \phi \subset K$ and $\partial^\alpha \phi_i \to \partial^\alpha \phi$ uniformly for all $\alpha$. The \emph{space of distributions} on $\Omega$ is then given as dual space of $\mathcal{D}(\Omega)$; we refer to the elements of this space as \emph{distributions}. 

There is a canonical embedding from the space of locally integrable functions $L^1_{\textnormal{loc}}(\R^d)$ into the space of distributions $f \mapsto T_f$ which is induced by $T_f(\phi) = \int f(x) \phi(x) \d x$ where $\phi$ is a test function. We call the distributions that can be represented this way \emph{regular} distributions.
\end{definition}

One important distribution, that is not a regular distribution, is the Dirac-impulse which is induced by $\delta : \phi \mapsto \phi(0)$. 

Distributions are relevant for our problem as they admit a notion of derivative that is not available if we consider functions only.

\begin{definition}
Let $\Omega \subset \R^n$ and $\alpha \in \N_0^n$.
We say that a distribution $T$ on $\Omega$ is \emph{(distributional) $\alpha$-differentiable} iff there exists a distribution $T'$ such that \begin{align*}
    T'(\phi) = (-1)^{|\alpha|} T( D^\alpha \phi)
\end{align*}
for all test functions $\phi$; here $D^\alpha$ denotes the total derivative with respect to $\alpha$ and $|\alpha|$ the degree of $\alpha$. We then refer to $T'$ as the \emph{$\alpha$-th derivative} of $T$ and denote it by $D^\alpha T$ (or $T'$ if $n = 1$).
\end{definition}
Note, that $T'$ is uniquely determined by this property so that the notation is valid.

\begin{definition}
Let $\Omega \subset \R^n$, $\alpha \in \N_0^n$ and $f \in L^1_{\textnormal{loc}}(\Omega)$ be a locally integrable function. We say that $f$ is \emph{weak $\alpha$-differentiable}, iff there exists a locally integrable function $f' \in L^1_{\textnormal{loc}}(\Omega)$ such that $D^\alpha T_f = T_{f'}$. We refer to $f'$ as the \emph{$\alpha$-th weak derivative of $f$} and denote it by $D^\alpha f$ (or $f'$ if $n = 1$). 
\end{definition}
Notice, that $D^\alpha f$ is uniquely determined by the property and that, if $f$ is $\alpha$-differentiable in the classical sense then the classical and weak derivative coincide.

A very important example of a function that is neither differentiable nor weak differentiable but distributional differentiable is the indicator function $\I_{(-\infty,0]}(x)$ which is 1 if $x$ is smaller or equal to 0 and 0 otherwise. Its first derivative is given by the Dirac-impulse, i.e. $\I'_{(-\infty,0]} = -\delta$. 

This function is of particular interest for our purposes, as we may write a drift process with abrupt drift only, with change points $t_1 < \dots < t_n$ and concepts $P_0,P_1,\dots,P_n$ as 
\begin{align*}
    p_t = \sum_{i = 0}^{n} \I_{(t_{i},t_{i+1}]} P_i,
\end{align*}
where $t_0 = -\infty$ and $t_{n+1} = \infty$. It is now obvious that the (distributional) derivative of $p_t$ is given by
\begin{align*}
    p_t' = \sum_{i = 1}^n \delta_{t_i} (P_i - P_{i-1}),
\end{align*}
where $\delta_t : \phi \mapsto \phi(t)$ denotes the shifted Dirac-impulse. 

\begin{definition}
Let $T,T_1,T_2$ be distributions on $\R^d$. The \emph{convolution} of $T$ with a test function $\phi$ is defined as $(T * \phi)(x) = T(\phi(x-\cdot))$. Notice, that this is a test function. 

If at least one of $T_1$ or $T_2$ have bounded support, then the \emph{convolution of distributions} is the distribution $T_1 * T_2$ defined by $(T_1 * T_2) * \phi = T_1 * (T_2 * \phi) $ for all test functions $\phi$.
\end{definition}

Notice, that the commonly known rule for derivatives of convolutions holds for the convolution of a distribution and a test function, i.e.
\begin{align*}
    D^\alpha (T * \phi) = (D^\alpha T) * \phi = T * (D^\alpha \phi),
\end{align*}
assuming $D^\alpha T$ exists. This statement can be extended to distributions:

\begin{lemma}
\label{lem:convolve}
Let $T_1, T_2$ be two $\alpha$-differentiable distributions with bounded support, then it holds
\begin{align*}
    D^\alpha (T_1 * T_2) = (D^\alpha T_1) * T_2 = T_1 * (D^\alpha T_2).
\end{align*}
In particular, let $f, g : \R \to \R$ be functions and let $f$ be distributional differentiable and $G$ be an antiderivative    of $g$. Assume that $g$ and $G$ have bounded support, then it holds
\begin{align*}
    f * g = T_f' * G.
\end{align*}
\end{lemma}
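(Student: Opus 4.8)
The plan is to reduce each claimed equality of distributions to an equality of their convolutions against an arbitrary test function, and then collapse all three expressions to one common form using only the convolution rules already available. The underlying principle is that a distribution $S$ is uniquely determined by the map $\phi \mapsto (S * \phi)(0) = S(\check\phi)$, where $\check\phi(y) = \phi(-y)$: since $(S*\phi)(x) = S(\phi(x-\cdot))$, evaluating at $x=0$ gives $S(\check\phi)$, and as $\phi$ ranges over all test functions so does $\check\phi$ (reflection is a bijection of the test-function space), so knowing $S * \phi$ for every $\phi$ recovers $S(\psi)$ for every $\psi$. Hence it suffices to show that $D^\alpha(T_1 * T_2)$, $(D^\alpha T_1) * T_2$ and $T_1 * (D^\alpha T_2)$ produce the same test function after convolving with a fixed but arbitrary $\phi$.

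Next I would fix a test function $\phi$ and compute each convolution, using repeatedly the already-stated identity $D^\alpha(T * \psi) = (D^\alpha T) * \psi = T * (D^\alpha \psi)$ together with the definition $(T_1 * T_2) * \psi = T_1 * (T_2 * \psi)$. Applying the derivative rule to $S = T_1 * T_2$ and then expanding gives
\begin{align*}
[D^\alpha(T_1 * T_2)] * \phi = (T_1 * T_2) * (D^\alpha \phi) = T_1 * (T_2 * D^\alpha \phi).
\end{align*}
Since $T_2 * \phi$ is itself a test function, the second expression unfolds as
\begin{align*}
[(D^\alpha T_1) * T_2] * \phi = (D^\alpha T_1) * (T_2 * \phi) = T_1 * \big(D^\alpha(T_2 * \phi)\big) = T_1 * (T_2 * D^\alpha \phi),
\end{align*}
and the third directly as
\begin{align*}
[T_1 * (D^\alpha T_2)] * \phi = T_1 * \big((D^\alpha T_2) * \phi\big) = T_1 * (T_2 * D^\alpha \phi).
\end{align*}
All three agree, which by the uniqueness principle proves the first assertion. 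Along the way I would record that $\mathrm{supp}(D^\alpha T_i) \subseteq \mathrm{supp}(T_i)$, so the bounded-support hypothesis is inherited by the derivatives and every distributional convolution appearing above is well defined.

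For the ``in particular'' part I would specialize to $\alpha = 1$, $T_1 = T_f$ and $T_2 = T_G$. Since $G$ is an antiderivative of $g$ we have $T_G' = T_g$, and because $g$ and $G$ have bounded support both $T_G$ and $T_G' = T_g$ do as well; this is exactly what allows $T_f * T_g$ and $T_f' * T_G$ to be defined even though $f$ itself need not be compactly supported. Crucially, the only equalities I actually invoke, $(D T_f) * T_G = T_f * (D T_G)$, were obtained above through convolutions of the form $(D T_f) * (\text{test function})$ and $T_f * (\text{test function})$, none of which demand bounded support of $f$. Recalling that ordinary function convolution agrees with the distributional convolution of the associated regular distributions (valid here as $g$ has bounded support), I conclude
\begin{align*}
f * g = T_f * T_g = T_f * (D T_G) = (D T_f) * T_G = T_f' * G.
\end{align*}

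The main obstacle I anticipate is the careful support bookkeeping rather than the algebra. The general identity presumes both $T_1, T_2$ compactly supported, whereas the specialization only provides bounded support for $g$ and $G$; I must therefore verify that the particular subset of manipulations I use never pairs an unbounded-support distribution with another distribution, but always with a test function. A secondary point requiring care is the justification of the uniqueness principle $\phi \mapsto (S*\phi)(0)$ — in particular that point-evaluation of $S*\phi$ is legitimate (the convolution of a distribution with a test function is a smooth function) and that reflection is a bijection of the test-function space, so that agreement of convolutions indeed forces agreement of the distributions.
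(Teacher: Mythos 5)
Your proposal is correct and follows essentially the same route as the paper's proof: both reduce each identity to an equality of convolutions against an arbitrary test function $\phi$, collapsing everything via the exchange rule $D^\alpha(T*\phi) = (D^\alpha T)*\phi = T*(D^\alpha\phi)$ together with the defining relation $(T_1*T_2)*\phi = T_1*(T_2*\phi)$. The only differences are refinements on your side: you obtain $(D^\alpha T_1)*T_2 = T_1*(D^\alpha T_2)$ directly rather than invoking commutativity of distributional convolution as the paper does, and you spell out the uniqueness principle and the support bookkeeping in the ``in particular'' step (where $T_f$ need not be compactly supported, so the general statement does not literally apply), both of which the paper compresses into a single sentence.
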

\begin{proof}
Let $\phi$ be any test function. It holds 
\begin{align*}
    D^\alpha ((T_1 * T_2) * \phi) &= D^\alpha (T_1 * (T_2 * \phi)) \\&= T_1 * (D^\alpha (T_2 * \phi)) \\&= T_1 * ((D^\alpha T_2) * \phi) \\&= (T_1 * (D^\alpha T_2)) * \phi.
\end{align*}
For the other equality use that $T_1 * T_2 = T_2 * T_1$. For the second statement use that $G' = g$ and consider $T_f$ and $T_g$.
\end{proof}
\end{toappendix}

\begin{remark} Notice that
\begin{enumerate}
    \item 
for $\X = \R^d$ it holds that $p_t$ is (distributional/weak) differentiable if and only if $t \mapsto \int f(x) \d p_t(x)$ is (distributional/weak) differentiable for all measurable, bounded continuous, or bounded smooth functions $f : \X \to \R$.
In particular, if $p_t$ is (distributional/weak) differentiable then the functions $f_i$ from Lemma~\ref{lem:commute} are (distributional/weak) differentiable.
\item mixing processes are (distributional/weak) differentiable if all $f_i$ are (distributional/weak) differentiable.
\item drift processes with abrupt drift only are distributional differentiable.
\end{enumerate}
\end{remark}

Using this definition, we can now relate the drift magnitude to the notion of derivatives. As it turns out, the signals obtained by the drift magnitude are essentially contorted derivatives:

\begin{theorem}
\label{thm:main}
Let $p_t$ be a (distributional/weak) differentiable drift process, $w : \R \to \R$ be a (weighting) function, and $k:\X \times \X \to \R$ be a bounded kernel, i.e. $k(x,x) < C$ for all $x \in \X$. Denote by $W$ a antiderivative (primitive integral) of $w$. Assume $\X$ is compact, $\T \subset \R$ is a bounded domain, and $w$ and $W$ have bounded support. Then the eigenfunctions of $K_{s,t}$ (considered as an integral operator) have (distributional/weak) derivatives $f_i'$ (with eigenvalues $\lambda_i$) and it holds
\begin{align*}
    \sum_{i = 1}^\infty \lambda_i (W * f'_i)^2(t) &= \Vert w * p_t \Vert_k^2
\end{align*}
in $L^2$,
where $\cdot* \cdot$ denotes the convolution operator. 

In particular, if $p_t$ has abrupt drift only, with change points $t_1,\dots,t_n$ and concepts $P_0,P_1,\dots,P_n$
then the expression simplifies to
\begin{align*}
    \sum_{i = 1}^\infty \left( \sum_{j = 1}^n s_{i,j} W(t-t_j)\right)^2
    &= \Vert w * p_t \Vert_k^2,
\end{align*}
with $\sum_{i = 1}^\infty s_{i,j}^2 = \Vert P_j-P_{j-1} \Vert_k^2$. 
Furthermore, if the length of the support of $W$ is smaller than the minimal distances between the $t_i$, i.e. if $\textnormal{supp } W \subset [a,a+l]$ and \mbox{$l < \min \{ t_{i+1} - t_i \:|\: i=1,...,n-1 \}$}, then only one summand is active at once and the sum commutes with square roots:
\begin{align*}
    \sum_{j = 1}^n \Vert P_j-P_{j-1} \Vert_k W(t-t_j) &= \Vert w * p_t \Vert_k.
\end{align*}
\end{theorem}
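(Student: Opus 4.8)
The plan is to start from the intermediate identity already recorded in the theorem for the abrupt-drift case, namely
\[
\sum_{i=1}^\infty \Big(\sum_{j=1}^n s_{i,j}\, W(t-t_j)\Big)^2 = \Vert w*p_t \Vert_k^2 \quad \text{in } L^2,
\]
with $\sum_{i} s_{i,j}^2 = \Vert P_j-P_{j-1}\Vert_k^2$, and to show that the separation hypothesis $l < \min_i(t_{i+1}-t_i)$ collapses the inner sum to a single surviving term, after which the square root can be taken factorwise. (The intermediate identity is itself obtained from the general formula by inserting the abrupt-drift structure, so I treat it as given.)

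First I would make the phrase \emph{only one summand is active} precise. Since $\textnormal{supp } W \subset [a,a+l]$, the translate $t \mapsto W(t-t_j)$ is supported on $[t_j+a,\,t_j+a+l]$, an interval of length $l$. Two such intervals for $j \neq j'$ can overlap only if $|t_j-t_{j'}| \leq l$; but consecutive change points are separated by strictly more than $l$ by assumption, so these supports are pairwise disjoint. Hence for every fixed $t$ there is at most one index $j_0=j_0(t)$ with $W(t-t_{j_0}) \neq 0$, and every other term vanishes identically.

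Next I would exploit this pointwise. Fix $t$ in the (open) support of $W(\cdot-t_{j_0})$; then for every $i$ the inner sum reduces to $s_{i,j_0}W(t-t_{j_0})$, so
\[
\sum_{i=1}^\infty \Big(\sum_{j=1}^n s_{i,j} W(t-t_j)\Big)^2 = W(t-t_{j_0})^2 \sum_{i=1}^\infty s_{i,j_0}^2 = W(t-t_{j_0})^2\, \Vert P_{j_0}-P_{j_0-1}\Vert_k^2 .
\]
Combining with the intermediate identity and taking (pointwise, a.e.) square roots yields $\Vert w*p_t\Vert_k = |W(t-t_{j_0})|\,\Vert P_{j_0}-P_{j_0-1}\Vert_k$. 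Since only $j_0$ is active, the left-hand side of the claimed formula is exactly $\Vert P_{j_0}-P_{j_0-1}\Vert_k\, W(t-t_{j_0})$, so the two sides match. For $t$ outside every support both sides are zero, and the finitely many interval boundaries form a Lebesgue null-set, so the identity holds a.e., i.e.\ in $L^2$.

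The main thing to be careful about is the passage from the squared to the unsquared identity: taking the square root of an $L^2$ equation is legitimate here \emph{only} because the separation hypothesis has turned the right-hand side into a genuine pointwise square of a single term, eliminating any cross-term ambiguity. The one real subtlety is the sign, since the square root produces $|W(t-t_{j_0})|$ whereas the statement writes $W(t-t_{j_0})$. Because the supports are disjoint, the sign of the active term is locally constant (it is simply the sign of $W$ on the relevant support component), so the identity holds up to the overall sign fixed by $W$: for a nonnegative primitive $W \geq 0$ it holds verbatim, and in general it should be read as $\big|\sum_j \Vert P_j-P_{j-1}\Vert_k W(t-t_j)\big| = \Vert w*p_t\Vert_k$. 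I expect the disjoint-support bookkeeping and this sign convention to be the only points requiring attention; the remainder is just factoring out the single surviving term.
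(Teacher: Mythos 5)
Your proposal covers only the \emph{last} claim of the theorem, and that is a genuine gap: the first two assertions, which you explicitly "treat as given," are the actual mathematical content that requires proof. The paper's argument has to establish (i) that the eigenfunctions of $K_{s,t}$ exist and are themselves (distributional/weak) differentiable --- this is done by comparing two expansions of $K_{s,t}$: the one coming from Lemma~\ref{lem:commute} (with differentiable components $\tilde f_i$, inherited from the differentiability of $p_t$) and the Mercer expansion, and then writing each eigenfunction $f_k$ as a weak limit of absolutely summable combinations of the $\tilde f_i$; (ii) the $L^2$ identity $\sum_i \lambda_i (W * f_i')^2 = \Vert w * p_t \Vert_k^2$, which needs a convolution--derivative exchange lemma for distributions ($w * f_i = W * f_i'$ when $W$ is the compactly supported antiderivative); and (iii) the abrupt-drift simplification, which is \emph{not} a mere "insertion": one must show that under abrupt drift the eigenfunctions are step functions with jumps exactly at the change points, so that $\sqrt{\lambda_i}\, f_i'$ is a finite sum of Dirac impulses $\sum_j s_{i,j}\delta_{t_j}$, that $(W * \delta_{t_j})(t) = W(t - t_j)$, and that the jump coefficients satisfy $\sum_i s_{i,j}^2 = \Vert P_j - P_{j-1}\Vert_k^2$. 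None of these steps appears in your proposal, so as a proof of the stated theorem it is incomplete.

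The part you do prove --- disjointness of the supports of $W(\cdot - t_j)$ under the separation hypothesis, hence at most one active summand, hence the square root can be taken factorwise --- is correct, and is in fact carried out in more detail than in the paper, which at this point simply invokes Corollary~1 of the cited prior work after observing that the $s_{i,j}$ do not depend on $w$. Your remark that the identity really yields $|W(t-t_{j_0})|$, so that the final formula holds verbatim only for a sign-definite antiderivative (or with absolute values), is a legitimate observation that the paper glosses over. But this polish on the easiest third of the statement does not compensate for the missing proof of the first two claims.
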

\begin{toappendix}
\begin{proof}[Theorem~\ref{thm:main}]
Using Lemma~\ref{lem:commute} we can represent 
\begin{align*}
    K_{s,t} = \sum_{i = 1}^\infty \tilde{\lambda_i} \tilde{f_i}(s) \tilde{f_i}(t),
\end{align*}
with $\tilde{\lambda}_i$ absolute summable and $\tilde{f}_i$ distributional differentiable, by assumption.

On the other hand, $K_{s,t}$ is a kernel and thus, by Mercers theorem, admits eigenvectors $\lambda > 0$ and eigenfunctions $f_i : \T \to \R$ such that $\sum_i \lambda_i < \infty$ and
\begin{align*}
    K_{s,t} = \sum_{i = 1}^\infty \lambda_i f_i(t) f_i(s).
\end{align*}
In particular, since $f_i$ is an eigenfunction of $K_{s,t}$ it holds
\begin{align*}
         \lambda_k f_k(t) 
      &= \int K_{s,t} f_k(s) \d s 
    \\&= \int \sum_{i = 1}^\infty \tilde{\lambda}_i \tilde{f}_i(t) \tilde{f}_i(s) f_k(s) \d s
    \\&\overset{!}{=} \sum_{i = 1}^\infty \tilde{\lambda}_i \int \tilde{f}_i(s) f_k(s) \d s \;\cdot\; \tilde{f}_i(t),
\end{align*}
where $!$ holds, since $\tilde{\lambda_i}$ is absolutely summable. Now, by Hölders inequality it follows that $\int \tilde{f}_i(s) f_k(s) \d s \leq \int \tilde{f}_i(s)^2 \d s \cdot \int f_k(s)^2 \d s \leq C$, thus $\tilde{\lambda}_i \int \tilde{f}_i(s) f_k(s) \d s$ is absolutely summable, too. As a consequence, $f_k$ has to be distributional differentiable as well, since it can be written as the weak limite, of bounded, differentiable functions. The first statement now follows using the same arguments as in the proof of Lemma~\ref{lem:commute} by Lemma~\ref{lem:convolve}.

The second part follows from the fact that $K_{s,t}$ is locally constant with jumps along $t_i$. Thus, $f_k$ is locally constant with jumps at $t_i$ and therefore admits the distributional derivative 
\begin{align*}
    \sqrt{\lambda_k} f_k(t) = \sum_{i = 1}^n s_{i,j} \delta_{t_i}
\end{align*}
and it holds $(W * \delta_t) = W(\cdot-t)$. The sum property follows by choosing $w$ with sufficiently small support, observing that $s_{i,j}$ does not depend on $w$, and using \cite[Corollary 1]{SSCI}.
\end{proof}
\end{toappendix}

Notice, that the last formulation is exactly the statement of \cite[Corollary 1]{SSCI}, observing that the antiderivative of the windowing function $\I_{[-l,0]} - \I_{[-2l,-l]}$ is exactly the shape-function $h_l$ from \cite[Theorem 1]{SSCI}. 
Furthermore, Theorem~\ref{thm:main} establishes a direct connection between the drift magnitude and the rate of change in terms of the derivatives of the eigenfunctions of the kernel auto-correlation. 

Since a large variety of drift detection algorithms can be analyzed using Lemma~\ref{lem:commute}, including 
D3~\cite{D3}, KS-Win~\cite{KSWIN}, MMDDDM~\cite{fail}, and ShapeDDM~\cite{SSCI}, we observe that they all analyze a weighted sum of smoothed out eigenfunctions. The main idea of spectral drift detection is to instead analyze the eigenfunctions directly. In the next section we will derive a new drift detection algorithm by adapting spectral clustering~\cite{SpectralClustering} to the task at hand.
 
\section{Spectral Drift Detection Method}
\label{sec:SDD}

\begin{figure}[t]
    \centering
    \begin{minipage}[b]{0.22\textwidth}
    \centering
    \includegraphics[width=\textwidth]{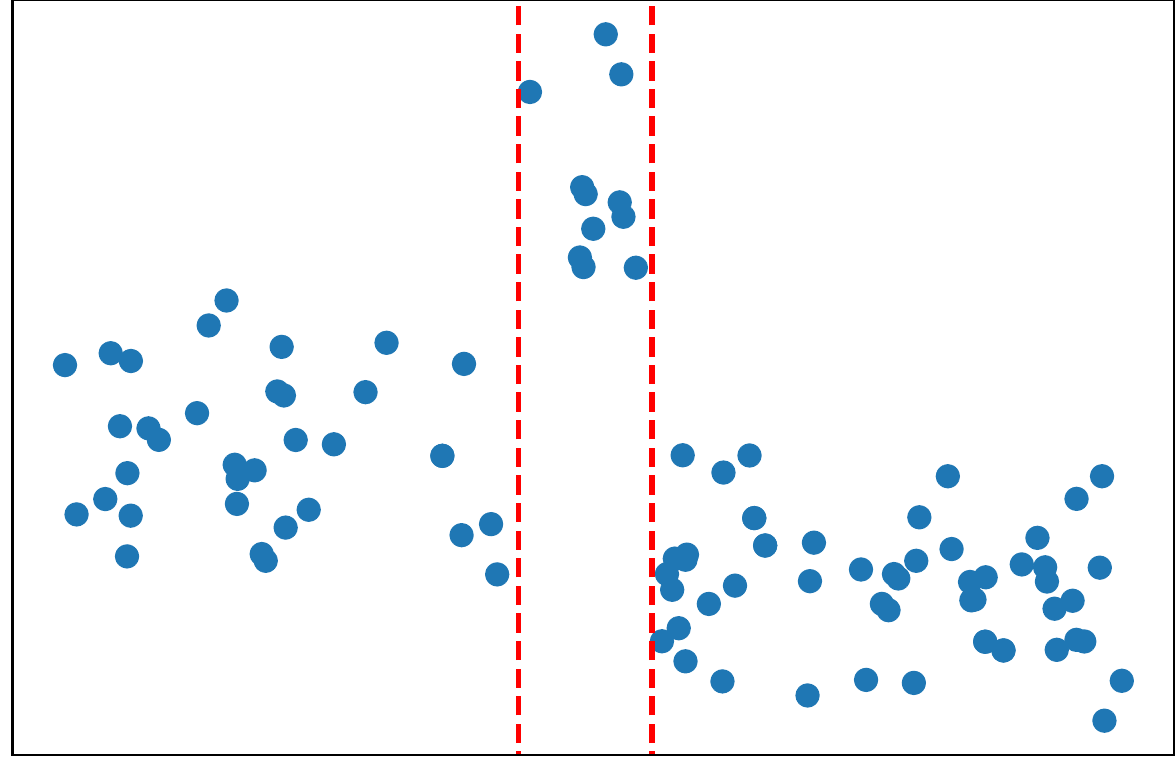}
    \subcaption{Toy dataset with drift}
    \end{minipage}
    \begin{minipage}[b]{0.22\textwidth}
    \centering
    \includegraphics[width=0.8\textwidth]{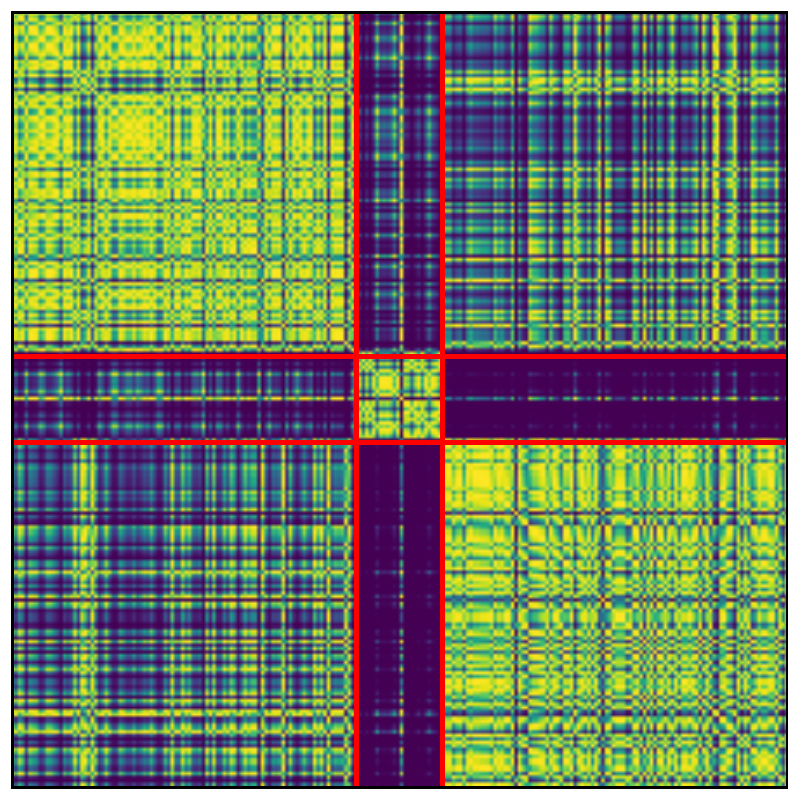}
    \subcaption{Kernel matrix}
    \end{minipage}
    \begin{minipage}[b]{0.22\textwidth}
    \centering
    \includegraphics[width=\textwidth]{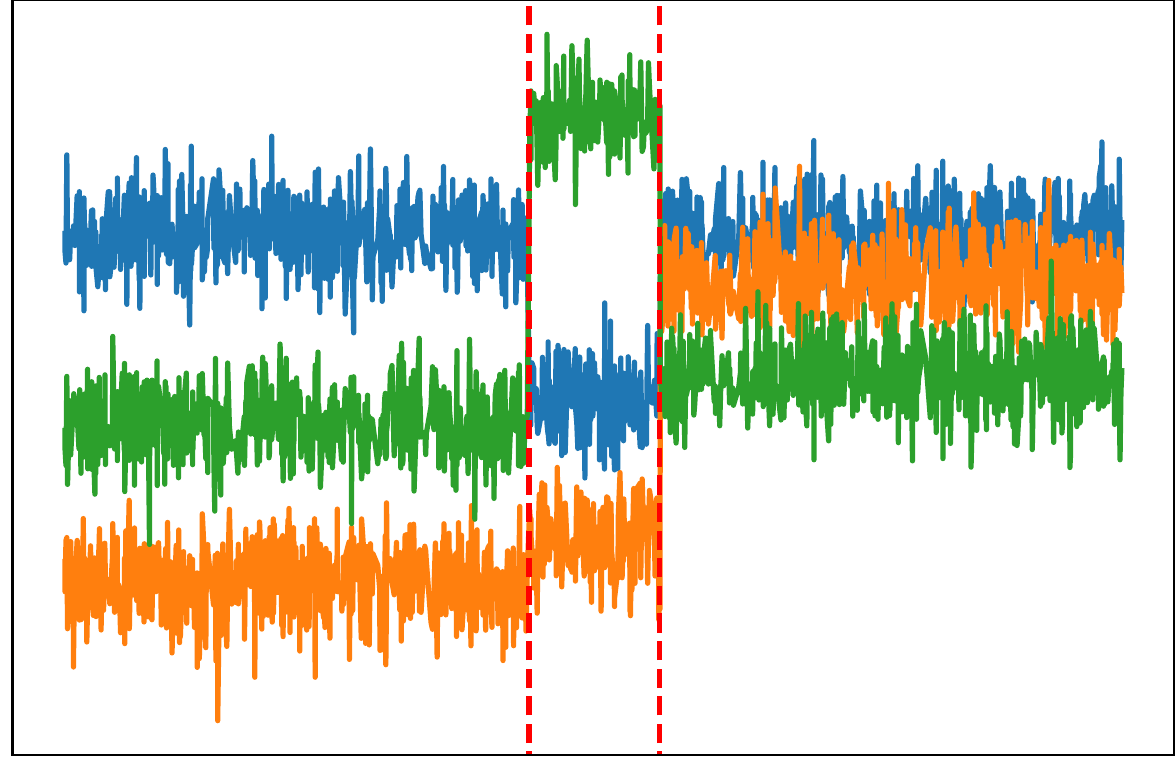}
    \subcaption{First 3 eigenvectors}
    \end{minipage}
    \begin{minipage}[b]{0.22\textwidth}
    \centering
    \includegraphics[width=\textwidth]{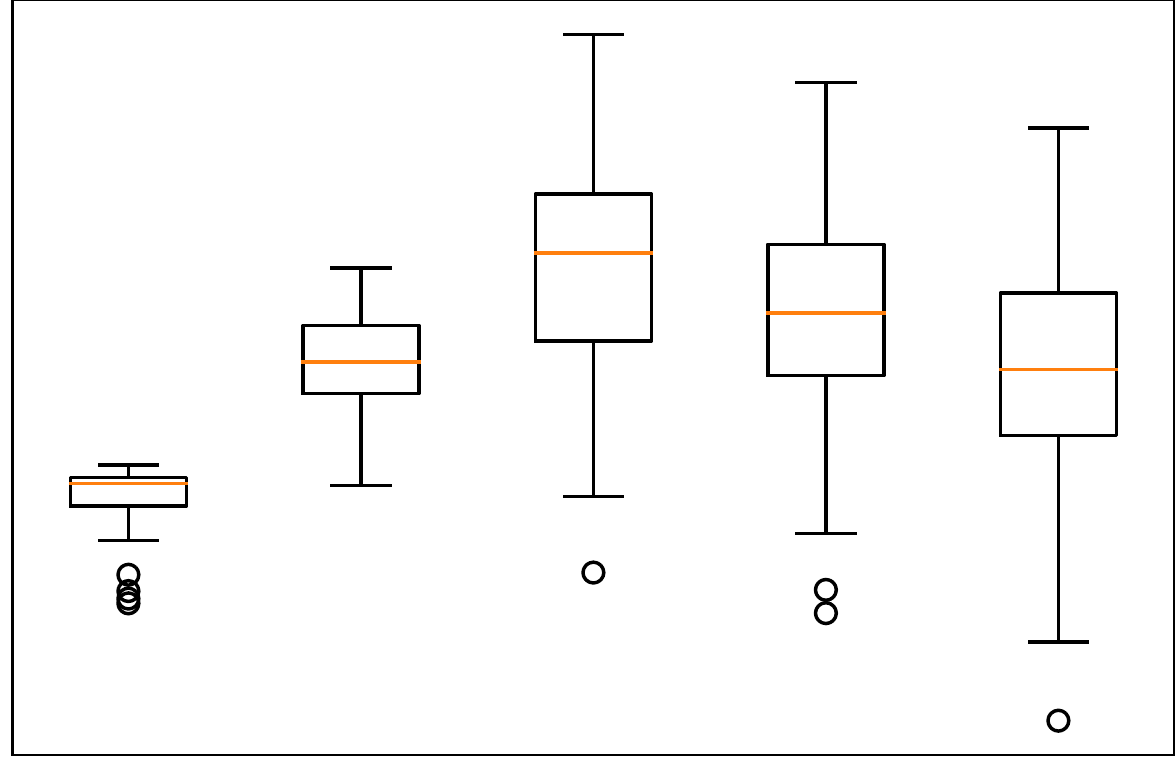}
    \subcaption{Results of cross-validation}
    \label{fig:illustration:cross_validation}
    \end{minipage}
    \caption{Four stages of the Spectral Drift Detection Method. Red lines (a,b,c) mark drift events. (d) shows R2-score for Decision Trees with varying number of leafs (1-5); as can be seen optimal score is obtained at 3 leafs/drifts.}
    \label{fig:illustration}
\end{figure}

The core observation that leads to the \emph{Spectral Drift Detection Method (SDDM)} is the following: If $p_t$ has abrupt drift only, then a kernel matrix obtained from samples drawn from the drift process takes on the shape of a block matrix, where the boundaries of the blocks corresponds to the change points and the blocks correspond to samplings from the mean auto-correlation. We have illustrated this in Fig.~\ref{fig:illustration}. Formally, we obtain the following statement:

\begin{theorem}
\label{thm:SSD}
Let $p_t$ be a drift process with abrupt drift only at $t_1 < \cdots < t_n$, on $\T = [t_0, t_{n+1}]$, and let $k : \X \times \X \to \R$ be a kernel. Then there exists a matrix $\tilde{K} \in \R^{(n+2) \times (n+2)}$ such that
\begin{align*}
    K_{t,s} = \sum_{i,j = 0}^{n+1} \I_{[t_i, t_{i+1})}(t) \tilde{K}_{i,j} \I_{[t_j, t_{j+1})}(s).
\end{align*}
And it holds that the eigenvalues of $K$ and $\tilde{K}$ coincide and if $v_l$ is the $l$-th eigenvector of $\tilde{K}$, then 
\begin{align*}
    f_l(t) = \sum_{i = 0}^{n+1} \I_{[t_i, t_{i+1})}(t) \frac{(v_l)_i}{t_{i+1}-t_i}
\end{align*}
is the $l$-th eigenfunction of $K$. 

\end{theorem}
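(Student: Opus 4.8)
The first claim is purely structural and I would dispatch it directly from the definition of abrupt drift. Since $p_t = P_i$ for $t \in [t_i,t_{i+1})$, the embedding $\mu_{p_t}$ is constant on each of these intervals, equal to $\mu_{P_i}$. Writing $K_{t,s} = \langle \mu_{p_t}, \mu_{p_s}\rangle$ and reading off the value on the block $t \in [t_i,t_{i+1})$, $s \in [t_j,t_{j+1})$, I would simply set $\tilde K_{i,j} := \langle \mu_{P_i}, \mu_{P_j}\rangle$; the displayed block representation is then immediate, and $\tilde K$ is the (symmetric, positive semidefinite) Gram matrix of the embedded concepts. Empty intervals such as $[t_{n+1},t_{n+2})$ contribute only a trivial row and column.

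For the spectral statement I would view $K$ as the integral operator $(Kf)(t) = \int_\T K_{t,s} f(s)\,\d s$ on $L^2(\T)$. The key observation, from which everything follows, is that the range of $K$ is contained in the finite-dimensional space $V$ of functions that are constant on each interval $[t_i,t_{i+1})$: for $t$ in a fixed interval the value $(Kf)(t) = \int \langle \mu_{P_i}, \mu_{p_s}\rangle f(s)\,\d s$ depends only on the index $i$, not on the particular $t$. Consequently $K$ has finite rank, every eigenfunction with nonzero eigenvalue necessarily lies in $V$ (being $\lambda^{-1} K f$), and the entire nonzero spectrum is captured by the restriction $K|_V$; the orthogonal complement of $V$ is exactly the eigenspace for eigenvalue $0$. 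This reduces an infinite-dimensional problem to a finite matrix computation.

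To identify that matrix I would coordinatize $V$ by the values $c_i$ that a function takes on $[t_i,t_{i+1})$ and compute, for $f = \sum_i c_i \I_{[t_i,t_{i+1})}$, that $(Kf)|_{[t_i,t_{i+1})} = \sum_j \tilde K_{i,j}\, d_j\, c_j$ with $d_j := t_{j+1}-t_j$. Hence $K|_V$ acts on value-coordinates as $\tilde K D$ with $D = \mathrm{diag}(d_0,\dots)$. The eigenvalue equation $Kf = \lambda f$ becomes $\tilde K D c = \lambda c$; writing $c_i = (v_l)_i/d_i$ (which is exactly the $1/(t_{i+1}-t_i)$ normalization in the stated eigenfunction) turns this into $D\tilde K v_l = \lambda v_l$, so that $f_l$ is recovered from an eigenvector $v_l$ and the nonzero spectrum of $K$ coincides with that of the weighted Gram matrix. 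I would then close the argument by direct substitution, verifying that the stated $f_l$ indeed satisfies $K f_l = \lambda_l f_l$.

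The main obstacle is precisely this measure-weighting by the interval lengths: the naive Gram matrix $\tilde K$ and the operator $K$ do not share their spectra on the nose, and the correct finite model is $\tilde K D$ (equivalently the symmetric $D^{1/2}\tilde K D^{1/2}$, which makes positive-semidefiniteness and the orthogonality of eigenfunctions transparent). The factor $1/(t_{i+1}-t_i)$ in $f_l$ is exactly the bookkeeping that absorbs this weighting, and getting the normalization consistent — including accounting for the infinite-dimensional $0$-eigenspace, so that ``all eigenvalues coincide'' is read as agreement of the nonzero spectra — is the one place where care is genuinely required. The remaining steps are routine linear algebra once the block structure and the range-in-$V$ observation are in place.
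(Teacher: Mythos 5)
Your proposal is correct, and it is in fact considerably more careful than the paper's own proof, which consists of two sentences: the first part ``follows directly from the definitions'' and the second part ``follows by the observation that $\int f_k(t) f_l(t) \,\d t = v_k^\top v_l$.'' Both arguments share the same core idea of reducing the spectral claim to finite-dimensional linear algebra on the space $V$ of functions constant on the intervals $[t_i,t_{i+1})$, but the paper's route goes through the claimed isometry $v_l \mapsto f_l$ and stops there, whereas you compute the action of $K|_V$ explicitly in value coordinates. That extra care pays off, because it exposes a genuine normalization inconsistency that the paper glosses over: with the stated weights $(v_l)_i/(t_{i+1}-t_i)$ one gets $\int f_k f_l \,\d t = v_k^\top D^{-1} v_l$ rather than $v_k^\top v_l$ (the latter would require $(v_l)_i/\sqrt{t_{i+1}-t_i}$), and, exactly as you derive, $f_l$ satisfies $K f_l = \lambda f_l$ precisely when $D \tilde K v_l = \lambda v_l$. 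Hence the nonzero spectrum of the integral operator agrees with that of $\tilde K D$ (equivalently $D^{1/2} \tilde K D^{1/2}$), not with that of $\tilde K$ itself; the two coincide only when all intervals have unit length --- already a single constant block on $\T = [0,2]$ gives eigenvalue $2\Vert\mu_{P_0}\Vert_\H^2$ for $K$ but $\Vert\mu_{P_0}\Vert_\H^2$ for $\tilde K$. So you have proved the corrected statement, and moreover supplied the step the paper omits entirely, namely that the nonzero spectrum is captured by $K|_V$ (your finite-rank, range-in-$V$ observation); orthonormality of the candidate functions alone, which is all the paper invokes, does not establish the eigenfunction property. One small slip on your side: the $0$-eigenspace is not \emph{exactly} $V^\perp$ but $V^\perp \oplus \ker(K|_V)$, and the latter summand is typically nontrivial here (in the ABA setting $P_0 = P_2$, so $\tilde K$ is singular); this does not affect your main argument, which only needs that eigenfunctions with nonzero eigenvalue lie in $V$.
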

\begin{toappendix}
\begin{proof}[Theorem~\ref{thm:SSD}]
The first part follows directly from the definitions. The second part follows by the observation that $\int f_k(t) f_l(t) \d t = v_k^\top v_l$.
\end{proof}
\end{toappendix}

As stated above, SDDM directly analyzes the eigenvectors of the kernel matrix. This allows us to connect it to other methods based on MDD, which usually consider
$ 
    \widehat{\textnormal{MMD}} = w^\top \hat{K} w,
$  
where $\hat{K}$ is the kernel matrix and \ $w = (-1/n_0,\dots,-1/n_0,1/n_1,\dots,1/n_1)$ with $n_0$ and $n_1$ the length of the reference and current window, respectively. It is easy to see that sliding along the stream (MMDDDM, ShapeDDM) or using auto-cuts (comparable to ADWIN) corresponds to an optimization problem over those vectors $w$, which essentially approximates, and in case of a single drift event is equivalent to, the computation of the larges eigenfunction of $K_{s,t}$. Furthermore, if we consider $p_t$ as a mixing problem, as suggested by Lemma~\ref{lem:commute}, then the eigenvectors of the kernel matrix are approximations of the mixing functions $f_i(t)$.

Due to the connection between the kernel matrix of streaming data and the kernel auto-correlation function, we obtain the drift detection algorithm by applying a modified version of Spectral Clustering~\cite{SpectralClustering} to the kernel matrix. Intuitively speaking, we form clusters of those time points with the same distribution. 
Recall that the spectral clustering algorithm is defined as follows:
\begin{enumerate}
    \item Compute a similarity matrix $K$ from data points $x_1,\cdots,x_n$
    \item Compute the row sum matrix $D = \textnormal{diag}(K \mathbf{1})$
    \item Compute the (normalized) Laplacian $L = D-K$ ($L = I - D^{-\frac{1}{2}} K D^{-\frac{1}{2}}$ or $L = I - D^{-1}K$)
    \item Compute the first $k$ eigenvectors $v_1,\dots,v_k$ of $L$ (eigenvectors with smallest eigenvalues)
    \item Apply a partition algorithm, e.g. $k$-means,
    to the rows of the eigenvectors, i.e. $\{ (v_{1j},\cdots,v_{kj})^\top \:|\: j = 1,\cdots,n\}$ \label{spectral_clustering:last_step}
\end{enumerate}

A common choice for the partition algorithm is $k$-means. However, as we want to find clusters of consecutive time points we need another partitioning method. To do so recall that $k$-means tries to find clusters $C$ such that the inner cluster variance is minimized. 
In addition, we want that the corresponding time points form time intervals, i.e. we want to find time intervals (correspond to clusters) such that the inner variance of the associated samples is minimized.
Notice, that this is exactly the objective of decision tree regression with variance reduction, i.e. we replace Step~\ref{spectral_clustering:last_step} in the Spectral Clustering algorithm by: 
\begin{itemize}
    \item[\ref{spectral_clustering:last_step}'. ] Train a decision tree to predict $t_j \mapsto (v_{1j},\cdots,v_{kj})$, with $t_j$ the arrival time of $x_j$. 
The boundary points of the leafs correspond to the drift events.
\end{itemize} 

Notice, that the plain CART algorithm is not suited to find optimal trees, as it produces far too many leaf nodes. To solve this issue we suggest to limit the number of leaf nodes to the number of expected drift events. To obtain the number of drift events, observe that the mean of $v_{ij}$ is the same for all $t_j$ between two drift events and different otherwise. Thus, the number of change points is equivalent to the optimal number of leafs, due to the bias-variance tradeoff. Therefore, we can obtain the number of drift events via cross-validation. We illustrated an example of the score for different numbers of leafs in Fig.~\ref{fig:illustration:cross_validation}. This completes the derivation of the Spectral Drift Detection Method, which is summarized in Algorithm~\ref{alg:SDD}.

\begin{algorithm}[!t]
	\caption{Spectral Drift Detection Method}
	\label{alg:SDD}
	\begin{algorithmic}[1]
		\Procedure{SDDM: Spectral Drift Detection Method}{$(x_i)$ data stream, $n_\text{eigen}$ number of eigenvectors, $n_\text{itr}$ number of iterations in cross-validation, $k_{\max}$ maximal number of drifts} \;
		\State Initialize Window $W \gets []$\;
		\While{Not at end of stream $(x_i)$} \;
	    \State $W \gets W + [x_i]$ \Comment{Add new sample}
	    \If{$|W| > n_{\max}$}
	        \State $\textsc{pop}(W)$ \Comment{Drop oldest sample}
	    \EndIf
	    \If{$|W| > n_{\min}$}
	        \State $K \gets \textsc{ComputeKernel}(W)$ \Comment{Compute similarity matrix} \label{alg:SDD:start_offline}
	        \State $D'_{ii} \gets \left(\sum_{j = 1}^{|W|} K_{ij} \right)^{-\frac{1}{2}}$
	        \State $L = I - D' K D'$
	        \State $V \gets \textsc{eigen}(L,n_\text{eigen})$ 
	        \Comment{Compute $n_\textnormal{eigen}$ first eigenvectors}
	        \State $k \gets 
	        \textsc{FindNumberOfLeafsByCrossValidation}(n_\text{itr},k_{\max},V)$
	        \If{$k > 1$}
	            \State $t \gets \textsc{TrainTreeWithLeafs}(V,k)$
	            \State $d_1,\cdots,d_k \gets \textsc{ExtractDecisionThresholds}(t)$
	            \State Alert drift at $d_1,\cdots,d_k$
	        \EndIf\label{alg:SDD:end_offline}
	    \EndIf
		\EndWhile
		\EndProcedure
	\end{algorithmic}
\end{algorithm}

\subsection{Implementation Details and Algorithmic Properties}
We will now discuss some of the aspects of SDDM in more detail.

\begin{figure}[t]
  \centering
  \begin{minipage}[b]{0.2\textwidth}
    \centering
    \includegraphics[width=\textwidth]{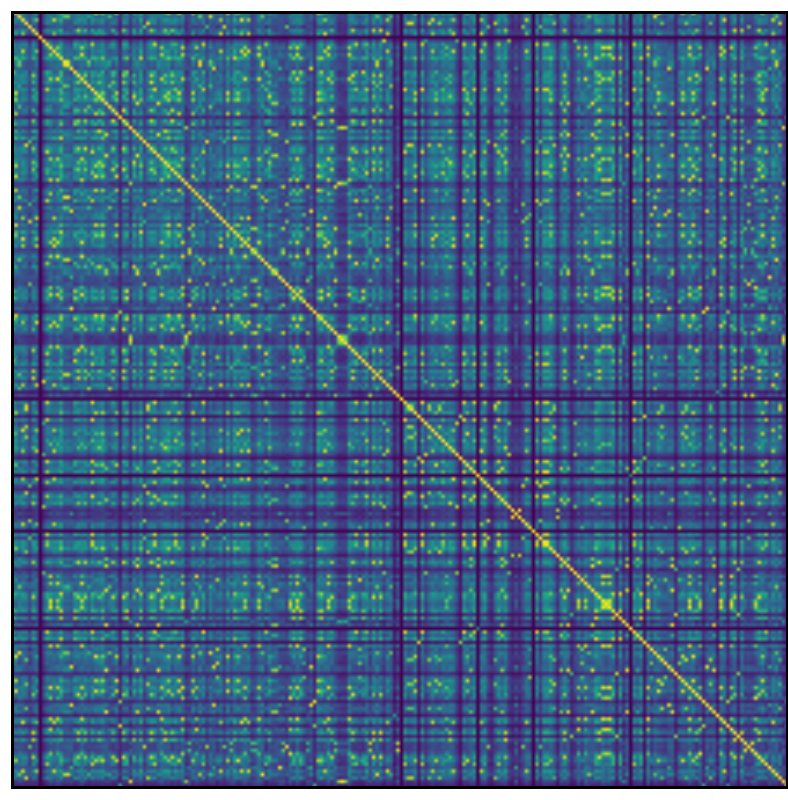}
    \subcaption{Gauss\label{fig:kernels:rbf}}
  \end{minipage}
  \hfill
  \begin{minipage}[b]{0.2\textwidth}
  \centering
    \includegraphics[width=\textwidth]{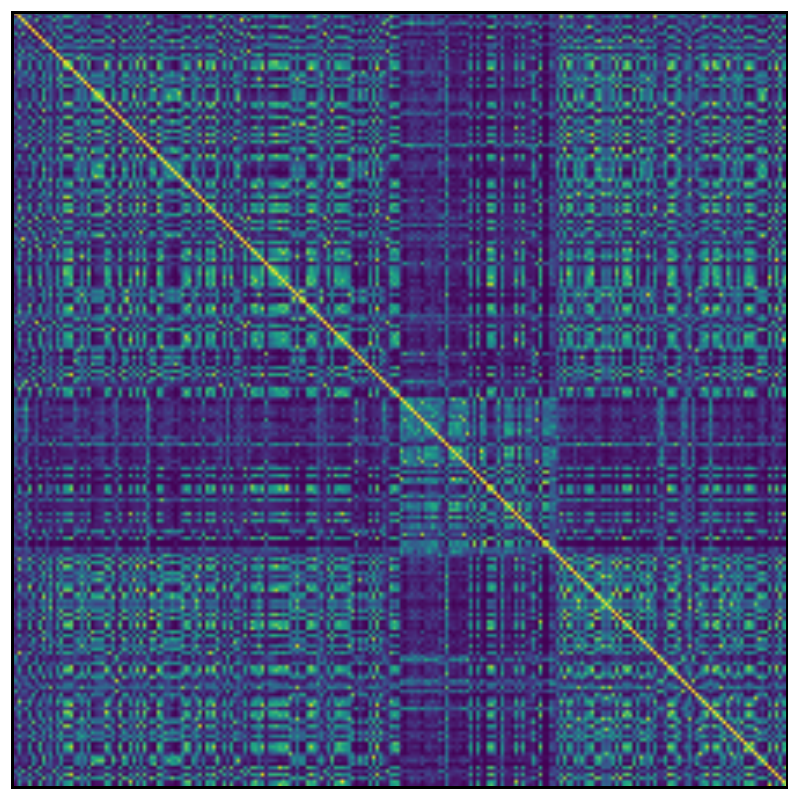}
    \subcaption{MomentTree\label{fig:kernels:tree}}
  \end{minipage}
  \hfill
  \begin{minipage}[b]{0.55\textwidth}
  \centering
    \includegraphics[width=0.95\textwidth]{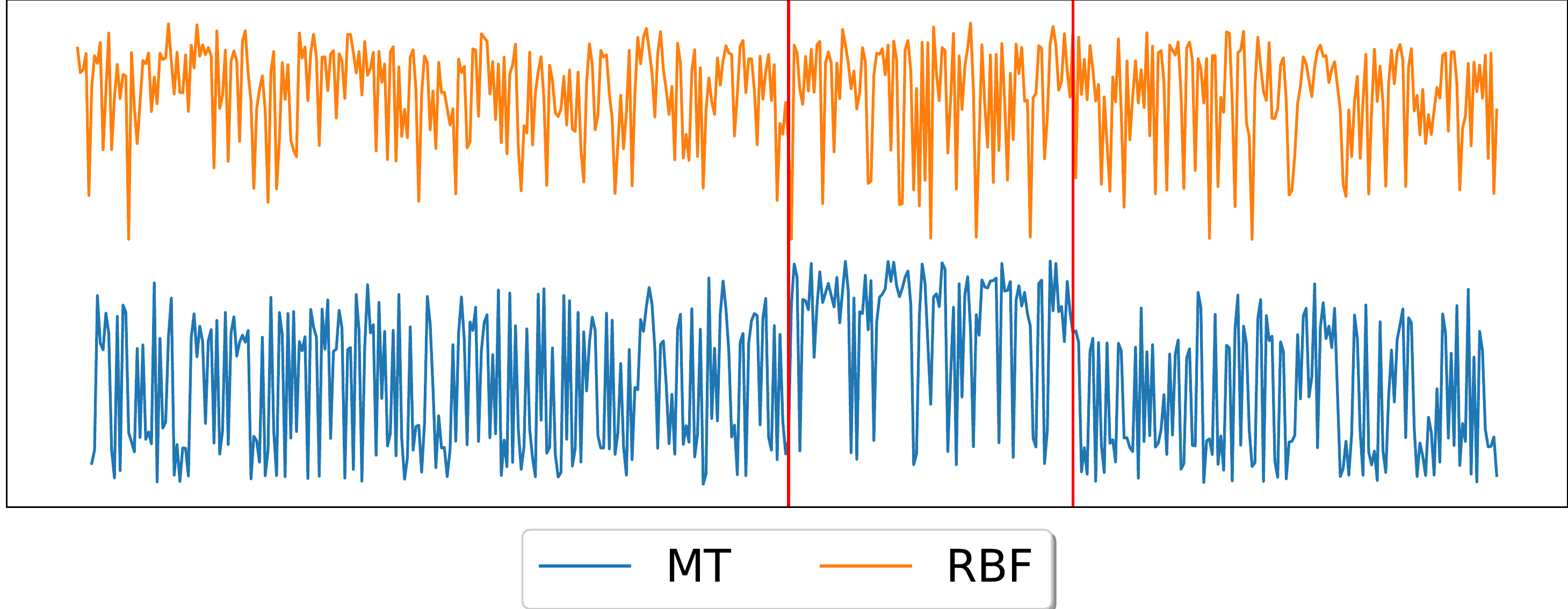}
    \subcaption{First eigenvector\label{fig:kernels:eigen}}
  \end{minipage}
  \caption{Comparison of MomentTree (MT) and Gauss (RBF) kernel matrix on ForestCovertype dataset (ABA, $|B| = 100$; see Section~\ref{sec:exp}) and first eigenvector of both kernels matrices. MT shows significant difference between drifts.}
  \label{fig:kernels}
\end{figure}

\paragraph{MomentTree-Kernel} Although any kernel can be used for SDDM, we suggest to make use of a kernel based on MomentTrees~\cite{MomentTree} which seems to provide a better distance measure than the commonly used Gauss-kernel~\cite{ida}. 

MomentTrees is a method for decision tree based conditional density estimation; in the context of drift they are trained to predict the arrival time $t$ based on the sample $x$~\cite{ida}, i.e. they learn $x \mapsto t$. The idea is derived from the observation that, by using conditional density estimation, the data space can be segmented into regions with the same drift behavior~\cite{esann}. 
Due to this fact, the trees will find a segmentation of the data space that provides a particularly good binning to measure the distance between $p_t$ and $p_s$ \cite{ida}. 

To obtain a kernel we use tree similarities~\cite{breiman2003random}, i.e. we train RandomForest of MomentTrees on the current window, the similarity of two points is given by the ratio of trees for which both point belong to the same leaf region.
We illustrated the benefit of using MomentTree-kernels over Gauss-kernels in Fig.~\ref{fig:kernels}.

\paragraph{Decision Threshold / $p$-Value } In contrast to many other drift detection methods, SDDM is not based on a statistical test to determine whether or not a drift occurred at certain time. Therefore, we do not need a threshold $p$- or $\alpha$-value which is usually hard to estimate. Instead, SDDM uses a decision process, which is based on cross-validation, to determine the number of drift points. Although the cross-validation needs some parameters (number of iterations, train-test-split ratio), the method turns out to be robust in a large spectrum of choices.

\paragraph{Batch and Offline-Mode} Notice, that SDDM detects all drift events within a given window. Therefore, it can be applied in offline mode, i.e. we only run lines~\ref{alg:SDD:start_offline} to~\ref{alg:SDD:end_offline} in Algorithm~\ref{alg:SDD}. This is also beneficial for applying SDDM in an online setup as it involves several, computational expensive tasks as the training of the MomentTrees, the computation of the eigenvectors, and performing the cross-validation. Those issues can be solved by running lines~\ref{alg:SDD:start_offline} to~\ref{alg:SDD:end_offline} in a batch mode, i.e. instead of running them every time a new sample is received, we only execute them after a certain number of new samples are received.

\paragraph{Postprocessing} As SDDM will alert the same drift events as long as they are present in the current window, it is required to filter out events that are found multiple times. 
Though, the selection usually pinpoints the exact same time-stamp during several runs, we postprocess the selection using Ward clustering.
Furthermore, as false alarms usually only occur in a single batch we can reduce the number of false alarms by considering the drift events that are found in most batches (that can potentially contain them) only.

\section{Experiments}
\label{sec:exp}
We performed empirical experiments 
to evaluate capability of our method to detect abrupt drift and determine the time point of the drift.

\begin{figure}[t]
    \centering
    \includegraphics[width=\textwidth]{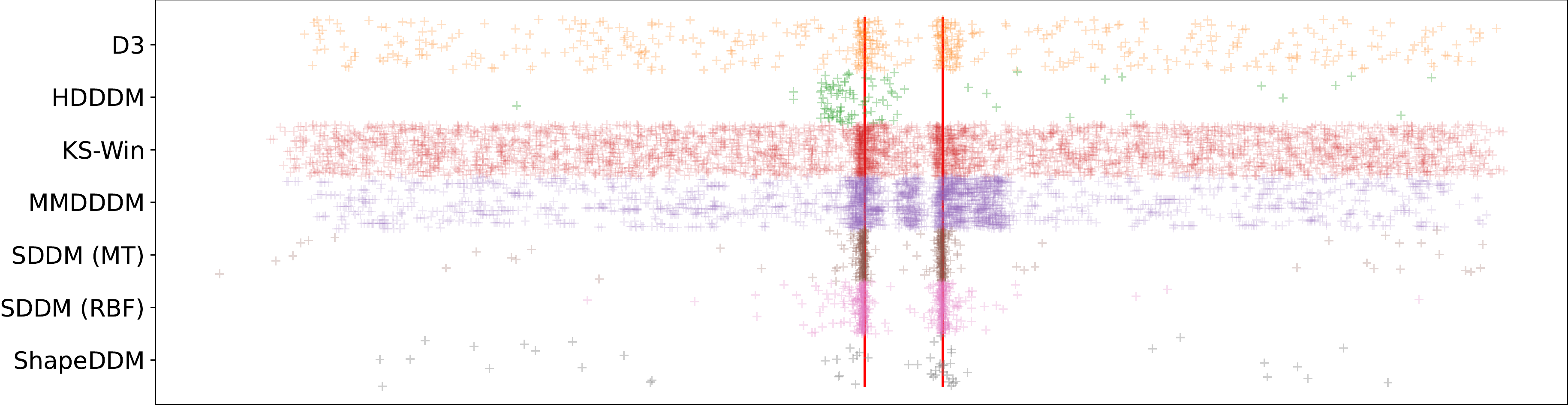}
    \caption{Results on Nebraska Weather dataset (ABC, $|B| = 100$) over 150 runs. $x$-axis represents time, $y$-axis and color represent method (and run), crosses mark found drift events, red lines mark true drift events.}
    \label{fig:results_weather}
\end{figure}

\begin{table}[]
    \centering
    \caption{Results over 150 runs. Table shows mean $\beta$-score ($\beta = 0.5$) and standard deviation. Numbers below stream types are B-length and maximal delay.}
    \small
\begin{tabular}{ll|r@{$\pm$}r@{\quad}r@{$\pm$}r@{\quad}r@{$\pm$}r@{\quad}r@{$\pm$}r@{\quad}r@{$\pm$}r}
\toprule
Dataset & \multicolumn{1}{l|}{Method}&  \multicolumn{2}{c}{AB} & \multicolumn{4}{c}{ABA} &  \multicolumn{4}{c}{ABA} \\
&\multicolumn{1}{r|}{}&
\multicolumn{2}{c}{--/25}&\multicolumn{2}{c}{25/12}&\multicolumn{2}{c}{50/25}&\multicolumn{2}{c}{25/12}&\multicolumn{2}{c}{50/25} \\
\midrule
\multirow{7}{*}{\rotatebox[origin=c]{90}{\parbox{2cm}{Electricity\\Market Prices}}} 
        & SDDM (MT) &  $\mathbf{0.87}$ &  $\mathbf{0.27}$ &  $\mathbf{0.20}$ &  $\mathbf{0.36}$ &  $\mathbf{0.65}$ &  $\mathbf{0.40}$ &  $\mathbf{0.86}$ &  $\mathbf{0.22}$ &  $\mathbf{0.95}$ &  $\mathbf{0.10}$ \\
        & SDDM (RBF) &           $0.50$ &           $0.46$ &           $0.00$ &           $0.00$ &           $0.01$ &           $0.07$ &           $0.64$ &           $0.36$ &           $0.67$ &           $0.34$ \\
& D3 &           $0.17$ &           $0.28$ &           $0.09$ &           $0.16$ &           $0.13$ &           $0.17$ &           $0.29$ &           $0.15$ &           $0.28$ &           $0.16$ \\
        & HDDDM &           $0.39$ &           $0.47$ &           $0.00$ &           $0.00$ &           $0.01$ &           $0.07$ &           $0.18$ &           $0.23$ &           $0.25$ &           $0.24$ \\
        & KSDD &           $0.09$ &           $0.09$ &           $0.07$ &           $0.08$ &           $0.14$ &           $0.10$ &           $0.18$ &           $0.09$ &           $0.21$ &           $0.10$ \\
        & MMDDDM &           $0.14$ &           $0.10$ &           $0.02$ &           $0.06$ &           $0.14$ &           $0.16$ &           $0.17$ &           $0.08$ &           $0.24$ &           $0.09$ \\
        & ShapeDDM &  $\mathbf{0.88}$ &  $\mathbf{0.27}$ &           $0.00$ &           $0.04$ &           $0.03$ &           $0.11$ &           $0.37$ &           $0.21$ &           $0.47$ &           $0.09$ \\
\hline
\multirow{7}{*}{\rotatebox[origin=c]{90}{\parbox{2cm}{Forest\\Covertype}}} 
        & SDDM (MT) &           $0.92$ &           $0.15$ &  $\mathbf{0.69}$ &  $\mathbf{0.32}$ &  $\mathbf{0.89}$ &  $\mathbf{0.13}$ &  $\mathbf{0.66}$ &  $\mathbf{0.24}$ &  $\mathbf{0.87}$ &  $\mathbf{0.19}$ \\
        & SDDM (RBF) &           $0.03$ &           $0.16$ &           $0.00$ &           $0.00$ &           $0.00$ &           $0.00$ &           $0.00$ &           $0.00$ &           $0.01$ &           $0.06$ \\
& D3 &           $0.44$ &           $0.27$ &           $0.21$ &           $0.18$ &           $0.26$ &           $0.15$ &           $0.29$ &           $0.15$ &           $0.31$ &           $0.12$ \\
        & HDDDM &           $0.28$ &           $0.38$ &           $0.03$ &           $0.11$ &           $0.12$ &           $0.21$ &           $0.17$ &           $0.21$ &           $0.33$ &           $0.20$ \\
        & KSDD &           $0.06$ &           $0.02$ &           $0.07$ &           $0.03$ &           $0.09$ &           $0.03$ &           $0.09$ &           $0.04$ &           $0.10$ &           $0.03$ \\
        & MMDDDM &           $0.17$ &           $0.09$ &           $0.11$ &           $0.12$ &           $0.29$ &           $0.15$ &           $0.19$ &           $0.13$ &           $0.21$ &           $0.08$ \\
        & ShapeDDM &  $\mathbf{0.98}$ &  $\mathbf{0.07}$ &           $0.02$ &           $0.10$ &           $0.22$ &           $0.25$ &           $0.47$ &           $0.11$ &           $0.47$ &           $0.11$ \\
\hline
\multirow{7}{*}{\rotatebox[origin=c]{90}{\parbox{2cm}{Random\\RBF}}} 
        & SDDM (MT) &  $\mathbf{0.93}$ &  $\mathbf{0.16}$ &           $0.20$ &           $0.35$ &  $\mathbf{0.78}$ &  $\mathbf{0.31}$ &  $\mathbf{0.56}$ &  $\mathbf{0.22}$ &  $\mathbf{0.83}$ &  $\mathbf{0.23}$ \\
        & SDDM (RBF) &           $0.06$ &           $0.23$ &           $0.00$ &           $0.00$ &           $0.01$ &           $0.08$ &           $0.01$ &           $0.07$ &           $0.14$ &           $0.27$ \\
& D3 &           $0.27$ &           $0.30$ &           $0.12$ &           $0.17$ &           $0.18$ &           $0.19$ &           $0.15$ &           $0.16$ &           $0.28$ &           $0.16$ \\
        & HDDDM &           $0.21$ &           $0.33$ &           $0.01$ &           $0.08$ &           $0.03$ &           $0.11$ &           $0.11$ &           $0.17$ &           $0.07$ &           $0.17$ \\
        & KSDD &           $0.09$ &           $0.06$ &           $0.08$ &           $0.07$ &           $0.13$ &           $0.07$ &           $0.12$ &           $0.08$ &           $0.16$ &           $0.07$ \\
        & MMDDDM &           $0.21$ &           $0.10$ &           $0.10$ &           $0.13$ &           $0.22$ &           $0.09$ &           $0.23$ &           $0.12$ &           $0.21$ &           $0.09$ \\
        & ShapeDDM &  $\mathbf{0.97}$ &  $\mathbf{0.10}$ &           $0.01$ &           $0.08$ &           $0.14$ &           $0.22$ &           $0.48$ &           $0.09$ &           $0.48$ &           $0.05$ \\
\hline
\multirow{7}{*}{\rotatebox[origin=c]{90}{\parbox{2cm}{Rotating\\Hyperplane}}} 
        & SDDM (MT) &           $0.24$ &           $0.42$ &           $0.00$ &           $0.04$ &           $0.00$ &           $0.00$ &           $0.18$ &           $0.23$ &           $0.20$ &           $0.25$ \\
        & SDDM (RBF) &           $0.77$ &           $0.21$ &           $0.00$ &           $0.00$ &           $0.00$ &           $0.00$ &  $\mathbf{0.70}$ &  $\mathbf{0.29}$ &  $\mathbf{0.75}$ &  $\mathbf{0.25}$ \\
& D3 &           $0.08$ &           $0.23$ &           $0.03$ &           $0.11$ &           $0.04$ &           $0.13$ &           $0.08$ &           $0.16$ &           $0.11$ &           $0.18$ \\
        & HDDDM &           $0.03$ &           $0.16$ &           $0.01$ &           $0.07$ &           $0.02$ &           $0.09$ &           $0.00$ &           $0.04$ &           $0.03$ &           $0.11$ \\
        & KSDD &           $0.04$ &           $0.10$ &           $0.06$ &           $0.10$ &           $0.07$ &           $0.11$ &           $0.04$ &           $0.08$ &           $0.07$ &           $0.10$ \\
        & MMDDDM &           $0.17$ &           $0.09$ &           $0.03$ &           $0.09$ &           $0.05$ &           $0.11$ &           $0.17$ &           $0.08$ &           $0.21$ &           $0.10$ \\
        & ShapeDDM &  $\mathbf{0.91}$ &  $\mathbf{0.15}$ &           $0.00$ &           $0.04$ &           $0.02$ &           $0.09$ &           $0.43$ &           $0.15$ &           $0.46$ &           $0.12$ \\
\hline
\multirow{7}{*}{\rotatebox[origin=c]{90}{\parbox{2cm}{STAGGER}}} 
        & SDDM (MT) &  $\mathbf{0.88}$ &  $\mathbf{0.30}$ &  $\mathbf{0.46}$ &  $\mathbf{0.47}$ &  $\mathbf{0.66}$ &  $\mathbf{0.43}$ &  $\mathbf{0.68}$ &  $\mathbf{0.33}$ &  $\mathbf{0.72}$ &  $\mathbf{0.38}$ \\
        & SDDM (RBF) &           $0.71$ &           $0.33$ &           $0.11$ &           $0.28$ &           $0.24$ &           $0.33$ &           $0.31$ &           $0.33$ &           $0.42$ &           $0.30$ \\
& D3 &           $0.27$ &           $0.31$ &           $0.17$ &           $0.18$ &           $0.25$ &           $0.18$ &           $0.18$ &           $0.17$ &           $0.28$ &           $0.18$ \\
        & HDDDM &           $0.24$ &           $0.33$ &           $0.01$ &           $0.08$ &           $0.05$ &           $0.13$ &           $0.06$ &           $0.14$ &           $0.16$ &           $0.20$ \\
        & KSDD &           $0.28$ &           $0.28$ &           $0.14$ &           $0.17$ &           $0.17$ &           $0.17$ &           $0.16$ &           $0.17$ &           $0.20$ &           $0.20$ \\
        & MMDDDM &           $0.18$ &           $0.07$ &           $0.16$ &           $0.15$ &           $0.24$ &           $0.13$ &           $0.19$ &           $0.10$ &           $0.20$ &           $0.09$ \\
        & ShapeDDM &  $\mathbf{0.94}$ &  $\mathbf{0.12}$ &           $0.06$ &           $0.17$ &           $0.16$ &           $0.22$ &           $0.24$ &           $0.25$ &           $0.28$ &           $0.24$ \\
\hline
\multirow{7}{*}{\rotatebox[origin=c]{90}{\parbox{2cm}{Nebraska\\Weather}}} 
        & SDDM (MT) &  $\mathbf{0.79}$ &  $\mathbf{0.34}$ &  $\mathbf{0.14}$ &  $\mathbf{0.30}$ &  $\mathbf{0.49}$ &  $\mathbf{0.43}$ &  $\mathbf{0.66}$ &  $\mathbf{0.31}$ &  $\mathbf{0.91}$ &  $\mathbf{0.13}$ \\
        & SDDM (RBF) &           $0.07$ &           $0.25$ &           $0.03$ &           $0.16$ &           $0.10$ &           $0.26$ &           $0.49$ &           $0.39$ &           $0.80$ &           $0.21$ \\
& D3 &           $0.06$ &           $0.18$ &  $\mathbf{0.12}$ &  $\mathbf{0.17}$ &           $0.18$ &           $0.18$ &           $0.23$ &           $0.17$ &           $0.24$ &           $0.16$ \\
        & HDDDM &           $0.01$ &           $0.08$ &           $0.00$ &           $0.04$ &           $0.02$ &           $0.10$ &           $0.00$ &           $0.00$ &           $0.09$ &           $0.19$ \\
        & KSDD &           $0.06$ &           $0.05$ &           $0.07$ &           $0.06$ &           $0.11$ &           $0.05$ &           $0.10$ &           $0.04$ &           $0.12$ &           $0.04$ \\
        & MMDDDM &           $0.05$ &           $0.10$ &           $0.06$ &           $0.12$ &           $0.12$ &           $0.12$ &           $0.18$ &           $0.09$ &           $0.24$ &           $0.10$ \\
        & ShapeDDM &           $0.55$ &           $0.48$ &           $0.01$ &           $0.07$ &           $0.03$ &           $0.12$ &           $0.17$ &           $0.23$ &           $0.18$ &           $0.24$ \\
\bottomrule
\end{tabular}

    \label{tbl:main_results}
\end{table}

We used standard drift benchmarks: ``Rotating Hyperplane''~\cite{skmultiflow}, 
``STAGGER''~\cite{LearningWithDriftDetection}
and ``RandomRBF''~\cite{skmultiflow}. 
And the following real world datasets: ``Electricity Market Prices''~\cite{electricitymarketdata}, ``Forest Covertype''~\cite{forestcovertypedataset}, and ``Nebraska Weather''~\cite{weatherdataset}. If a dataset provides a label, it is considered as an additional feature, thereby real drift is turned into distributional drift.
To control the concepts obtained from the real world datasets,
we split the datasets into time windows -- which then correspond to the concepts -- and then selecting samples randomly from those, thereby removing any unwanted drift during one concept. 
Using these concepts (either from the artificial or the real world dataset) we create data streams with a length of $1{,}600$ samples each, which consist of three segments (Warmup, Drift, and cOol down); the first (W), and last segment (O) contain $500+\Delta$ and $600-\Delta$ samples, respectively, and continue the neighbouring concept from the middle segment (D), where $\Delta$ is a randomly chosen number in range between $0$ and $100$. The middle segment (D) is $500$ samples long and contains 2 or 3 concepts in the ordering: AB, ABA, or ABC. In case AB, both concepts contain equally many samples. In cases ABA and ABC, the length of concept B is varied ($|B| \in \{25,50,75,100\}$) and the other two concepts are of equal length. Every stream is normalized before running the experiments.

We compare our proposed method spectral drift detection (with MomentTree-kernel) (SDDM (MT)) against the shape based drift detector (ShapeDDM)~\cite{SSCI}, which matches the drift magnitude against a certain shape, the usual MMD drift detection (MMDDDM)~\cite{MMD,fail}, which applies the MMD two sample test at every time point, 
the Hellinger distance drift detection method (HDDDM)~\cite{HDDM}, which computes the Hellinger distance using bins of the marignal distribution, Kolmogorov-Smirnov Windowing (KS-Win)~\cite{KSWIN}, which applies the Kolmogorov-Smirnov test feature wise, and D3~\cite{D3}, which is one of the state of the art drift detectors for unsupervised drift detection. 
We also consider SDDM with Gauss-kernel (SDDM (RBF)). As far as given we used default parameters for all methods; if not present or determined, we choose the optimal parameters with respect to $50$ independent runs. In particular, for all methods except SDDM we selected a shift of the detected drift points to optimally align them with the true drift events to compensate for potential, inherent delays.

We measure the quality of a method using the $\beta$-score~\cite{SSCI}, which is defined as $\frac{\text{tp}}{\text{p}+\beta \text{fp}}$ where $\text{p}$ is the number of all drifts, $\text{tp}$ the number of true positives, and $\text{fp}$ the number of false positives. Here, the closest drift alert to the actual drift event is considered as a true positive if it is still within range of the maximal delay, otherwise it is considered as a false positive. Every other detection is a false positive. We choose the maximal delay as halve the length of concept B (in the cases ABA and ABC) so that is is clear which drift is detected. Part of the results are shown in Fig.~\ref{fig:results_weather} and Table~\ref{tbl:main_results}, significantly best results ($t$-test, $p=0.001$) are marked bold face (additional tables and figures are provided in the supplement).

\begin{toappendix}

In addition to the results already presented in Section~\ref{sec:exp} we will provide additional analysis here. Table~\ref{tab:alarms} summarizes the median number of alerts and thus provides an upper bound on the specificity of the methods. Table~\ref{tab:delay} documents the median delay between a drift event and the closest detected drift, i.e. if choosing this as max delay, 50\% of all events are found. Similar information is provided in Fig.~\ref{fig:delay}, which shows the number of detected drifts vs. maximal delay. Table~\ref{tab:all} and Table~\ref{tab:short} show the $\beta$-score ($\beta = 0.5$) for all split length with varying choices of maximal delay. 

\begin{table}[]
    \centering
    \caption{Results over 150 runs. Table shows median number of alarms.}
\begin{tabular}{ll|@{\quad}r@{\quad}r@{\quad}r@{\quad}r@{\quad}r@{\quad}r@{\quad}r@{\quad}r@{\quad}r}
\toprule
DS & Method &  AB & \multicolumn{4}{c}{ABA} &  \multicolumn{4}{c}{ABC} \\
&& -- & 25 & 50 & 75 & 100 & 25 & 50 & 75 & 100 \\
\midrule
\multirow{7}{*}{\rotatebox[origin=c]{90}{\parbox{2cm}{Electricity\\Market Prices}}} & D3 &          2 &                2 &                3 &                3 &                 3 &                3 &                3 &                3 &                 3 \\
        & HDDDM &          1 &                0 &                0 &                0 &                 0 &                1 &                1 &                1 &                 1 \\
        & KSDD &         12 &               11 &               11 &               11 &                11 &               12 &               12 &               12 &                12 \\
        & MMDDDM &         12 &                7 &                9 &               13 &                13 &               16 &               16 &               17 &                20 \\
        & SDDM(MT) &          1 &                0 &                2 &                2 &                 2 &                2 &                2 &                2 &                 2 \\
        & SDDM(RBF) &          1 &                0 &                0 &                0 &                 0 &                2 &                2 &                2 &                 2 \\
        & ShapeDDM &          1 &                0 &                0 &                0 &                 1 &                1 &                1 &                1 &                 1 \\
\hline\multirow{7}{*}{\rotatebox[origin=c]{90}{\parbox{2cm}{Forest\\Covertype}}} & D3 &          3 &                2 &                3 &                3 &                 3 &                3 &                3 &                3 &                 3 \\
        & HDDDM &          2 &                1 &                1 &                1 &                 1 &                2 &                1 &                1 &                 1 \\
        & KSDD &         31 &               32 &               35 &               36 &                36 &               32 &               31 &               30 &                31 \\
        & MMDDDM &         12 &               14 &               11 &               11 &                12 &               12 &               15 &               16 &                17 \\
        & SDDM(MT) &          1 &                3 &                2 &                3 &                 3 &                2 &                2 &                2 &                 2 \\
        & SDDM(RBF) &          0 &                0 &                0 &                0 &                 0 &                0 &                0 &                0 &                 0 \\
        & ShapeDDM &          1 &                0 &                1 &                1 &                 1 &                1 &                1 &                1 &                 1 \\
\hline\multirow{7}{*}{\rotatebox[origin=c]{90}{\parbox{2cm}{Random\\RBF}}} & D3 &          3 &                2 &                2 &                3 &                 3 &                3 &                3 &                3 &                 3 \\
        & HDDDM &          2 &                0 &                0 &                0 &                 1 &                2 &                1 &                1 &                 1 \\
        & KSDD &         16 &               17 &               17 &               17 &                17 &               15 &               17 &               17 &                18 \\
        & MMDDDM &          9 &               11 &               16 &               15 &                16 &               10 &               13 &               16 &                17 \\
        & SDDM(MT) &          1 &                0 &                2 &                2 &                 2 &                1 &                2 &                2 &                 2 \\
        & SDDM(RBF) &          0 &                0 &                0 &                0 &                 0 &                0 &                0 &                0 &                 0 \\
        & ShapeDDM &          1 &                0 &                1 &                1 &                 1 &                1 &                1 &                1 &                 1 \\
\hline\multirow{7}{*}{\rotatebox[origin=c]{90}{\parbox{2cm}{Rotating\\Hyperplane}}} & D3 &          1 &                1 &                1 &                1 &                 1 &                2 &                2 &                2 &                 2 \\
        & HDDDM &          0 &                0 &                0 &                0 &                 0 &                0 &                0 &                0 &                 0 \\
        & KSDD &          7 &                7 &                7 &                7 &                 7 &                7 &                7 &                7 &                 7 \\
        & MMDDDM &         12 &                7 &                5 &                6 &                 8 &               13 &               14 &               17 &                17 \\
        & SDDM(MT) &          0 &                0 &                0 &                0 &                 0 &                0 &                0 &                0 &                 0 \\
        & SDDM(RBF) &          2 &                0 &                0 &                0 &                 0 &                2 &                2 &                2 &                 2 \\
        & ShapeDDM &          1 &                0 &                0 &                0 &                 0 &                1 &                1 &                1 &                 1 \\
\hline\multirow{7}{*}{\rotatebox[origin=c]{90}{\parbox{2cm}{STAGGER}}} & D3 &          2 &                3 &                3 &                3 &                 3 &                3 &                2 &                2 &                 3 \\
        & HDDDM &          2 &                1 &                1 &                1 &                 1 &                1 &                2 &                1 &                 1 \\
        & KSDD &          3 &                3 &                3 &                3 &                 4 &                3 &                3 &                3 &                 4 \\
        & MMDDDM &         11 &                8 &               16 &               18 &                21 &               10 &               18 &               16 &                22 \\
        & SDDM(MT) &          1 &                1 &                2 &                2 &                 2 &                2 &                2 &                2 &                 2 \\
        & SDDM(RBF) &          1 &                0 &                0 &                2 &                 2 &                1 &                2 &                2 &                 2 \\
        & ShapeDDM &          1 &                0 &                1 &                1 &                 2 &                1 &                1 &                1 &                 1 \\
\hline\multirow{7}{*}{\rotatebox[origin=c]{90}{\parbox{2cm}{Nebraska\\Weather}}} & D3 &          2 &                2 &                3 &                3 &                 3 &                3 &                3 &                3 &                 3 \\
        & HDDDM &          0 &                0 &                0 &                0 &                 0 &                0 &                1 &                1 &                 1 \\
        & KSDD &         22 &               22 &               23 &               23 &                24 &               24 &               25 &               25 &                26 \\
        & MMDDDM &          8 &                7 &               14 &               14 &                17 &               17 &               15 &               15 &                16 \\
        & SDDM(MT) &          1 &                0 &                2 &                2 &                 2 &                2 &                2 &                2 &                 2 \\
        & SDDM(RBF) &          0 &                0 &                0 &                0 &                 1 &                1 &                2 &                3 &                 2 \\
        & ShapeDDM &          1 &                0 &                0 &                1 &                 1 &                1 &                1 &                0 &                 0 \\
\bottomrule
\end{tabular}
    \label{tab:alarms}
\end{table}
\begin{table}[]
    \centering
    \caption{Results over 150 runs. Table shows median distance between drift event and closes alarm.}
\begin{tabular}{ll|@{\quad}r@{\quad}r@{\quad}r@{\quad}r@{\quad}r@{\quad}r@{\quad}r@{\quad}r@{\quad}r}
\toprule
DS & Method &  AB & \multicolumn{4}{c}{ABA} &  \multicolumn{4}{c}{ABC} \\
&& -- & 25 & 50 & 75 & 100 & 25 & 50 & 75 & 100 \\
\midrule
\multirow{7}{*}{\rotatebox[origin=c]{90}{\parbox{2cm}{Electricity\\Market Prices}}} & D3 &         67 &              85 &              66 &              53 &               50 &              16 &               27 &              37 &                50 \\
        & HDDDM &         28 &             263 &             264 &             230 &              397 &              31 &               47 &              56 &                67 \\
        & KSDD &         21 &              23 &              25 &              37 &               37 &              12 &               16 &              11 &                 8 \\
        & MMDDDM &         10 &             134 &              29 &              26 &               26 &               6 &                5 &               6 &                 6 \\
        & SDDM(MT) &          2 &               3 &               3 &               2 &                3 &               1 &                1 &               0 &                 1 \\
        & SDDM(RBF) &          5 &             203 &              57 &              37 &               51 &               3 &                3 &               3 &                 3 \\
        & ShapeDDM &          9 &             210 &             144 &              85 &               75 &              15 &               25 &              37 &                50 \\
\hline\multirow{7}{*}{\rotatebox[origin=c]{90}{\parbox{2cm}{Forest\\Covertype}}} & D3 &          6 &              20 &              30 &              27 &                7 &              15 &               27 &              37 &                18 \\
        & HDDDM &         31 &              77 &              69 &              86 &              100 &              29 &               27 &              37 &                50 \\
        & KSDD &          6 &              11 &               9 &               4 &                3 &               7 &                9 &               9 &                 7 \\
        & MMDDDM &          2 &              13 &              14 &              10 &               11 &               9 &                7 &               3 &                 3 \\
        & SDDM(MT) &          2 &               0 &               1 &               0 &                1 &              12 &                2 &               1 &                 2 \\
        & SDDM(RBF) &          3 &                -- &                -- &                -- &                 -- &                -- &               31 &              37 &                50 \\
        & ShapeDDM &          4 &              45 &              35 &              42 &               50 &              12 &               25 &              37 &                50 \\
\hline\multirow{7}{*}{\rotatebox[origin=c]{90}{\parbox{2cm}{Random\\RBF}}} & D3 &         20 &              86 &              37 &              37 &               50 &              27 &               27 &              37 &                50 \\
        & HDDDM &         34 &             345 &             362 &             224 &              125 &              39 &               85 &             112 &               113 \\
        & KSDD &          8 &              13 &              20 &              15 &               16 &              12 &               13 &              14 &                 9 \\
        & MMDDDM &          2 &              17 &               9 &               6 &                3 &               8 &               10 &               4 &                 3 \\
        & SDDM(MT) &          2 &              12 &               2 &               1 &                2 &              12 &                3 &               1 &                 2 \\
        & SDDM(RBF) &          1 &                -- &              27 &              37 &               32 &              12 &               25 &              37 &                44 \\
        & ShapeDDM &          4 &             266 &              56 &              50 &               54 &              12 &               25 &              37 &                50 \\
\hline\multirow{7}{*}{\rotatebox[origin=c]{90}{\parbox{2cm}{Rotating\\Hyperplane}}} & D3 &        169 &             176 &             200 &             197 &              212 &             135 &              171 &              78 &                85 \\
        & HDDDM &        321 &             373 &             344 &             385 &              286 &             260 &              348 &             342 &               334 \\
        & KSDD &         83 &              50 &              73 &              64 &               60 &              58 &               74 &              53 &                55 \\
        & MMDDDM &          2 &             165 &             157 &             154 &              133 &               9 &                8 &               2 &                 2 \\
        & SDDM(MT) &          3 &             558 &             533 &             398 &              447 &              12 &               25 &              37 &                50 \\
        & SDDM(RBF) &          2 &             111 &             679 &             182 &              777 &               3 &                4 &               4 &                 5 \\
        & ShapeDDM &          4 &             388 &             284 &             301 &              292 &              12 &               25 &              37 &                50 \\
\hline\multirow{7}{*}{\rotatebox[origin=c]{90}{\parbox{2cm}{STAGGER}}} & D3 &         30 &              23 &              28 &              37 &               50 &              20 &               29 &              37 &                50 \\
        & HDDDM &         36 &             157 &             102 &             104 &              311 &              72 &               66 &              75 &                80 \\
        & KSDD &         16 &              36 &              41 &              39 &               48 &              22 &               31 &              38 &                50 \\
        & MMDDDM &          4 &              12 &               8 &               8 &                6 &              10 &               11 &               7 &                 6 \\
        & SDDM(MT) &          1 &               1 &               1 &               0 &                1 &               2 &                2 &               1 &                 2 \\
        & SDDM(RBF) &          2 &              12 &               4 &               1 &                2 &              12 &               16 &               6 &                12 \\
        & ShapeDDM &          4 &              26 &              48 &             450 &               51 &              12 &               25 &              38 &                50 \\
\hline\multirow{7}{*}{\rotatebox[origin=c]{90}{\parbox{2cm}{Nebraska\\Weather}}} & D3 &        197 &              76 &              34 &              38 &               50 &              16 &               28 &              37 &                50 \\
        & HDDDM &        311 &             338 &             308 &             235 &              253 &             178 &               74 &              71 &                80 \\
        & KSDD &         15 &              12 &              18 &              14 &               11 &              10 &               12 &               7 &                 5 \\
        & MMDDDM &         57 &              88 &              21 &              17 &               14 &               4 &                6 &               8 &                 5 \\
        & SDDM(MT) &          4 &              13 &               3 &               3 &                4 &               4 &                2 &               1 &                 2 \\
        & SDDM(RBF) &          9 &               1 &              25 &              26 &                9 &               5 &                2 &               1 &                 2 \\
        & ShapeDDM &         13 &             278 &              88 &              76 &               73 &              27 &               42 &              50 &                64 \\
\bottomrule
\end{tabular}
    \label{tab:delay}
\end{table}
\begin{figure}
    \centering
    \includegraphics[width=\textwidth]{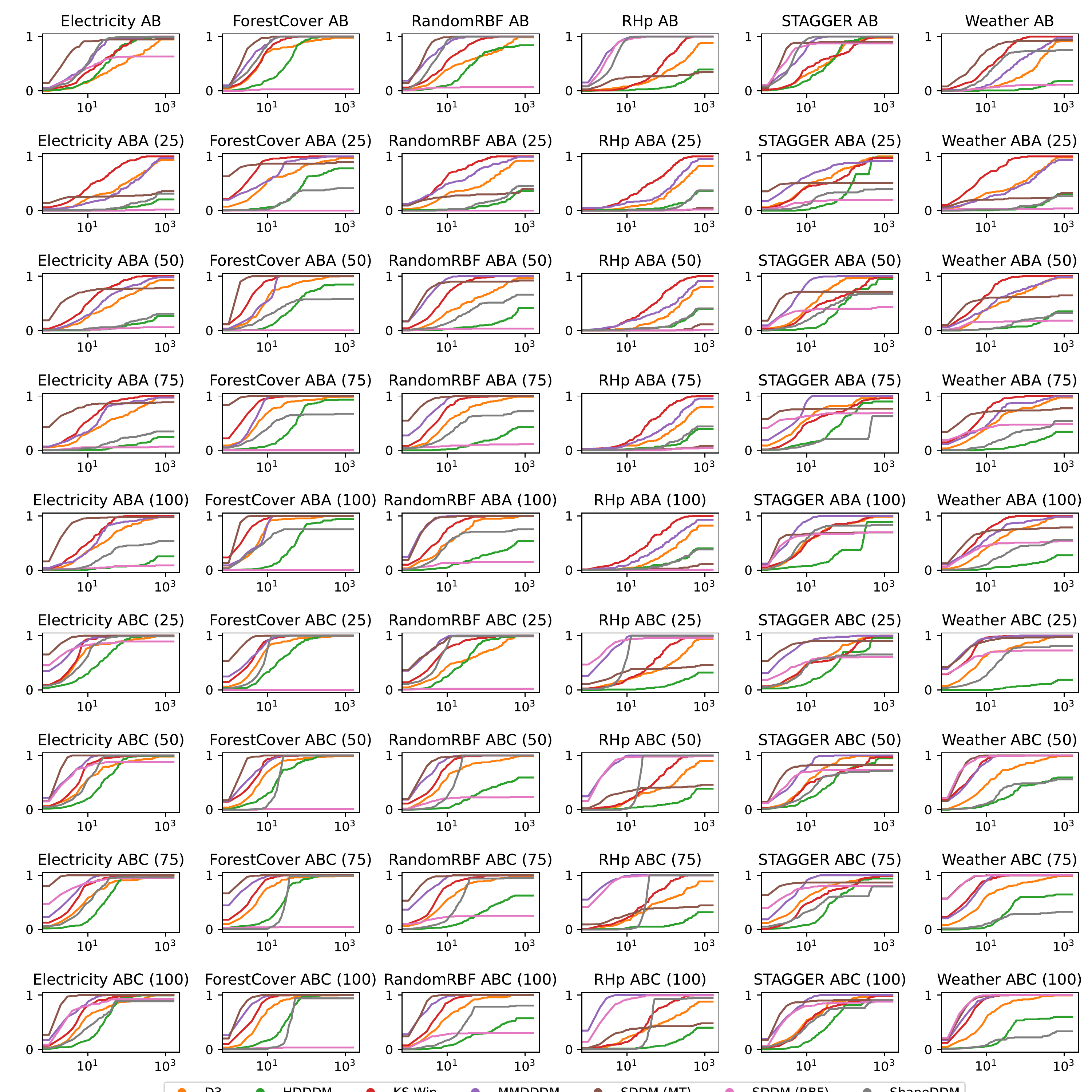}
    \caption{Rate of detected drifts ($y$-axis) for different choices of max delay ($x$-axis).}
    \label{fig:delay}
\end{figure}

\begin{landscape}
\begin{longtable}{ll|l@{$\pm$}l@{\;\;}l@{$\pm$}l@{\;\;}l@{$\pm$}l@{\;\;}l@{$\pm$}l@{\;\;}l@{$\pm$}l@{\;\;}l@{$\pm$}l@{\;\;}l@{$\pm$}l@{\;\;}l@{$\pm$}l@{\;\;}l@{$\pm$}l}
\caption{Results over 150 runs. Table shows mean $\beta$-score ($\beta = 0.5$) and standard deviation. Numbers below stream types are B-length and maximal delay.}
\label{tab:all}
\endfirsthead
\toprule
DS & Method & \multicolumn{2}{c}{AB} &  \multicolumn{8}{c}{ABA} & \multicolumn{8}{c}{ABC} \\
&& \multicolumn{2}{c}{--/50}&\multicolumn{2}{c}{25/12}&\multicolumn{2}{c}{50/25}&\multicolumn{2}{c}{75/36}&\multicolumn{2}{c}{100/50}&\multicolumn{2}{c}{25/12}&\multicolumn{2}{c}{50/25}&\multicolumn{2}{c}{75/36}&\multicolumn{2}{c}{100/50} \\
\midrule
\multirow{7}{*}{\rotatebox[origin=c]{90}{\parbox{2cm}{Electricity\\Market Prices}}} & D3 &           $0.23$ &           $0.30$ &           $0.09$ &           $0.16$ &           $0.13$ &           $0.17$ &           $0.20$ &           $0.20$ &           $0.29$ &           $0.23$ &           $0.29$ &           $0.15$ &           $0.28$ &           $0.16$ &           $0.35$ &           $0.21$ &           $0.41$ &           $0.25$ \\
        & HDDDM &           $0.65$ &           $0.44$ &           $0.00$ &           $0.00$ &           $0.01$ &           $0.07$ &           $0.02$ &           $0.10$ &           $0.03$ &           $0.13$ &           $0.18$ &           $0.23$ &           $0.25$ &           $0.24$ &           $0.31$ &           $0.24$ &           $0.41$ &           $0.19$ \\
        & KSDD &           $0.12$ &           $0.08$ &           $0.07$ &           $0.08$ &           $0.14$ &           $0.10$ &           $0.17$ &           $0.09$ &           $0.21$ &           $0.09$ &           $0.18$ &           $0.09$ &           $0.21$ &           $0.10$ &           $0.23$ &           $0.09$ &           $0.25$ &           $0.09$ \\
        & MMDDDM &           $0.17$ &           $0.09$ &           $0.02$ &           $0.06$ &           $0.14$ &           $0.16$ &           $0.19$ &           $0.13$ &           $0.19$ &           $0.13$ &           $0.17$ &           $0.08$ &           $0.24$ &           $0.09$ &           $0.23$ &           $0.08$ &           $0.19$ &           $0.06$ \\
        & SDDM(MT) &  $\mathbf{0.89}$ &  $\mathbf{0.25}$ &  $\mathbf{0.20}$ &  $\mathbf{0.36}$ &  $\mathbf{0.65}$ &  $\mathbf{0.40}$ &  $\mathbf{0.75}$ &  $\mathbf{0.35}$ &  $\mathbf{0.88}$ &  $\mathbf{0.21}$ &  $\mathbf{0.86}$ &  $\mathbf{0.22}$ &  $\mathbf{0.95}$ &  $\mathbf{0.10}$ &  $\mathbf{0.97}$ &  $\mathbf{0.08}$ &  $\mathbf{0.97}$ &  $\mathbf{0.07}$ \\
        & SDDM(RBF) &           $0.55$ &           $0.45$ &           $0.00$ &           $0.00$ &           $0.01$ &           $0.07$ &           $0.04$ &           $0.16$ &           $0.05$ &           $0.19$ &           $0.64$ &           $0.36$ &           $0.67$ &           $0.34$ &           $0.79$ &           $0.28$ &           $0.78$ &           $0.29$ \\
        & ShapeDDM &  $\mathbf{0.93}$ &  $\mathbf{0.18}$ &           $0.00$ &           $0.04$ &           $0.03$ &           $0.11$ &           $0.07$ &           $0.17$ &           $0.19$ &           $0.24$ &           $0.37$ &           $0.21$ &           $0.47$ &           $0.09$ &           $0.46$ &           $0.11$ &           $0.43$ &           $0.16$ \\
\hline
\multirow{7}{*}{\rotatebox[origin=c]{90}{\parbox{2cm}{Forest\\Covertype}}} & D3 &           $0.46$ &           $0.27$ &           $0.21$ &           $0.18$ &           $0.26$ &           $0.15$ &           $0.49$ &           $0.28$ &           $0.62$ &           $0.23$ &           $0.29$ &           $0.15$ &           $0.31$ &           $0.12$ &           $0.41$ &           $0.22$ &           $0.54$ &           $0.23$ \\
        & HDDDM &           $0.57$ &           $0.38$ &           $0.03$ &           $0.11$ &           $0.12$ &           $0.21$ &           $0.18$ &           $0.23$ &           $0.21$ &           $0.23$ &           $0.17$ &           $0.21$ &           $0.33$ &           $0.20$ &           $0.36$ &           $0.19$ &           $0.36$ &           $0.18$ \\
        & KSDD &           $0.07$ &           $0.01$ &           $0.07$ &           $0.03$ &           $0.09$ &           $0.03$ &           $0.10$ &           $0.02$ &           $0.10$ &           $0.02$ &           $0.09$ &           $0.04$ &           $0.10$ &           $0.03$ &           $0.12$ &           $0.03$ &           $0.12$ &           $0.02$ \\
        & MMDDDM &           $0.17$ &           $0.09$ &           $0.11$ &           $0.12$ &           $0.29$ &           $0.15$ &           $0.33$ &           $0.15$ &           $0.32$ &           $0.15$ &           $0.19$ &           $0.13$ &           $0.21$ &           $0.08$ &           $0.23$ &           $0.08$ &           $0.24$ &           $0.09$ \\
        & SDDM(MT) &           $0.92$ &           $0.15$ &  $\mathbf{0.69}$ &  $\mathbf{0.32}$ &  $\mathbf{0.89}$ &  $\mathbf{0.13}$ &  $\mathbf{0.86}$ &  $\mathbf{0.14}$ &  $\mathbf{0.86}$ &  $\mathbf{0.15}$ &  $\mathbf{0.66}$ &  $\mathbf{0.24}$ &  $\mathbf{0.87}$ &  $\mathbf{0.19}$ &  $\mathbf{0.91}$ &  $\mathbf{0.13}$ &  $\mathbf{0.93}$ &  $\mathbf{0.12}$ \\
        & SDDM(RBF) &           $0.03$ &           $0.16$ &           $0.00$ &           $0.00$ &           $0.00$ &           $0.00$ &           $0.00$ &           $0.00$ &           $0.00$ &           $0.00$ &           $0.00$ &           $0.00$ &           $0.01$ &           $0.06$ &           $0.03$ &           $0.13$ &           $0.02$ &           $0.13$ \\
        & ShapeDDM &  $\mathbf{0.98}$ &  $\mathbf{0.07}$ &           $0.02$ &           $0.10$ &           $0.22$ &           $0.25$ &           $0.32$ &           $0.24$ &           $0.36$ &           $0.21$ &           $0.47$ &           $0.11$ &           $0.47$ &           $0.11$ &           $0.48$ &           $0.08$ &           $0.45$ &           $0.15$ \\
\hline\multirow{7}{*}{\rotatebox[origin=c]{90}{\parbox{2cm}{Random\\RBF}}} & D3 &           $0.31$ &           $0.30$ &           $0.12$ &           $0.17$ &           $0.18$ &           $0.19$ &           $0.31$ &           $0.21$ &           $0.42$ &           $0.23$ &           $0.15$ &           $0.16$ &           $0.28$ &           $0.16$ &           $0.37$ &           $0.22$ &           $0.45$ &           $0.24$ \\
        & HDDDM &           $0.35$ &           $0.36$ &           $0.01$ &           $0.08$ &           $0.03$ &           $0.11$ &           $0.04$ &           $0.14$ &           $0.10$ &           $0.20$ &           $0.11$ &           $0.17$ &           $0.07$ &           $0.17$ &           $0.10$ &           $0.19$ &           $0.13$ &           $0.21$ \\
        & KSDD &           $0.11$ &           $0.04$ &           $0.08$ &           $0.07$ &           $0.13$ &           $0.07$ &           $0.18$ &           $0.07$ &           $0.18$ &           $0.06$ &           $0.12$ &           $0.08$ &           $0.16$ &           $0.07$ &           $0.16$ &           $0.07$ &           $0.19$ &           $0.05$ \\
        & MMDDDM &           $0.21$ &           $0.10$ &           $0.10$ &           $0.13$ &           $0.22$ &           $0.09$ &           $0.26$ &           $0.10$ &           $0.24$ &           $0.11$ &           $0.23$ &           $0.12$ &           $0.21$ &           $0.09$ &           $0.23$ &           $0.09$ &           $0.23$ &           $0.09$ \\
        & SDDM(MT) &  $\mathbf{0.93}$ &  $\mathbf{0.16}$ &           $0.20$ &           $0.35$ &  $\mathbf{0.78}$ &  $\mathbf{0.31}$ &  $\mathbf{0.90}$ &  $\mathbf{0.14}$ &  $\mathbf{0.92}$ &  $\mathbf{0.13}$ &  $\mathbf{0.56}$ &  $\mathbf{0.22}$ &  $\mathbf{0.83}$ &  $\mathbf{0.23}$ &  $\mathbf{0.89}$ &  $\mathbf{0.18}$ &  $\mathbf{0.91}$ &  $\mathbf{0.16}$ \\
        & SDDM(RBF) &           $0.06$ &           $0.23$ &           $0.00$ &           $0.00$ &           $0.01$ &           $0.08$ &           $0.07$ &           $0.22$ &           $0.11$ &           $0.27$ &           $0.01$ &           $0.07$ &           $0.14$ &           $0.27$ &           $0.17$ &           $0.31$ &           $0.21$ &           $0.35$ \\
        & ShapeDDM &  $\mathbf{0.97}$ &  $\mathbf{0.10}$ &           $0.01$ &           $0.08$ &           $0.14$ &           $0.22$ &           $0.30$ &           $0.24$ &           $0.34$ &           $0.22$ &           $0.48$ &           $0.09$ &           $0.48$ &           $0.05$ &           $0.45$ &           $0.13$ &           $0.38$ &           $0.20$ \\
\pagebreak
\hline\multirow{7}{*}{\rotatebox[origin=c]{90}{\parbox{2cm}{Rotating\\Hyperplane}}} & D3 &           $0.14$ &           $0.30$ &           $0.03$ &           $0.11$ &           $0.04$ &           $0.13$ &           $0.07$ &           $0.17$ &           $0.07$ &           $0.16$ &           $0.08$ &           $0.16$ &           $0.11$ &           $0.18$ &           $0.17$ &           $0.21$ &           $0.19$ &           $0.22$ \\
        & HDDDM &           $0.03$ &           $0.18$ &           $0.01$ &           $0.07$ &           $0.02$ &           $0.09$ &           $0.02$ &           $0.10$ &           $0.04$ &           $0.13$ &           $0.00$ &           $0.04$ &           $0.03$ &           $0.11$ &           $0.03$ &           $0.11$ &           $0.03$ &           $0.11$ \\
        & KSDD &           $0.08$ &           $0.13$ &           $0.06$ &           $0.10$ &           $0.07$ &           $0.11$ &           $0.11$ &           $0.12$ &  $\mathbf{0.15}$ &  $\mathbf{0.14}$ &           $0.04$ &           $0.08$ &           $0.07$ &           $0.10$ &           $0.13$ &           $0.13$ &           $0.17$ &           $0.15$ \\
        & MMDDDM &           $0.17$ &           $0.09$ &           $0.03$ &           $0.09$ &           $0.05$ &           $0.11$ &           $0.07$ &           $0.14$ &           $0.08$ &           $0.12$ &           $0.17$ &           $0.08$ &           $0.21$ &           $0.10$ &           $0.22$ &           $0.08$ &           $0.22$ &           $0.08$ \\
        & SDDM(MT) &           $0.25$ &           $0.42$ &           $0.00$ &           $0.04$ &           $0.00$ &           $0.00$ &           $0.00$ &           $0.03$ &           $0.01$ &           $0.07$ &           $0.18$ &           $0.23$ &           $0.20$ &           $0.25$ &           $0.20$ &           $0.27$ &           $0.23$ &           $0.29$ \\
        & SDDM(RBF) &           $0.78$ &           $0.19$ &           $0.00$ &           $0.00$ &           $0.00$ &           $0.00$ &           $0.00$ &           $0.00$ &           $0.00$ &           $0.00$ &  $\mathbf{0.70}$ &  $\mathbf{0.29}$ &  $\mathbf{0.75}$ &  $\mathbf{0.25}$ &  $\mathbf{0.80}$ &  $\mathbf{0.21}$ &  $\mathbf{0.80}$ &  $\mathbf{0.22}$ \\
        & ShapeDDM &  $\mathbf{0.91}$ &  $\mathbf{0.15}$ &           $0.00$ &           $0.04$ &           $0.02$ &           $0.09$ &           $0.02$ &           $0.09$ &           $0.03$ &           $0.11$ &           $0.43$ &           $0.15$ &           $0.46$ &           $0.12$ &           $0.46$ &           $0.10$ &           $0.43$ &           $0.15$ \\
\hline\multirow{7}{*}{\rotatebox[origin=c]{90}{\parbox{2cm}{STAGGER}}} & D3 &           $0.34$ &           $0.31$ &           $0.17$ &           $0.18$ &           $0.25$ &           $0.18$ &           $0.36$ &           $0.23$ &           $0.42$ &           $0.26$ &           $0.18$ &           $0.17$ &           $0.28$ &           $0.18$ &           $0.36$ &           $0.23$ &           $0.43$ &           $0.25$ \\
        & HDDDM &           $0.41$ &           $0.36$ &           $0.01$ &           $0.08$ &           $0.05$ &           $0.13$ &           $0.15$ &           $0.23$ &           $0.12$ &           $0.21$ &           $0.06$ &           $0.14$ &           $0.16$ &           $0.20$ &           $0.21$ &           $0.23$ &           $0.26$ &           $0.23$ \\
        & KSDD &           $0.31$ &           $0.27$ &           $0.14$ &           $0.17$ &           $0.17$ &           $0.17$ &           $0.27$ &           $0.23$ &           $0.37$ &           $0.23$ &           $0.16$ &           $0.17$ &           $0.20$ &           $0.20$ &           $0.29$ &           $0.23$ &           $0.33$ &           $0.23$ \\
        & MMDDDM &           $0.18$ &           $0.07$ &           $0.16$ &           $0.15$ &           $0.24$ &           $0.13$ &           $0.23$ &           $0.09$ &           $0.18$ &           $0.06$ &           $0.19$ &           $0.10$ &           $0.20$ &           $0.09$ &           $0.24$ &           $0.10$ &           $0.18$ &           $0.05$ \\
        & SDDM(MT) &  $\mathbf{0.88}$ &  $\mathbf{0.30}$ &  $\mathbf{0.46}$ &  $\mathbf{0.47}$ &  $\mathbf{0.66}$ &  $\mathbf{0.43}$ &  $\mathbf{0.73}$ &  $\mathbf{0.42}$ &  $\mathbf{0.68}$ &  $\mathbf{0.46}$ &  $\mathbf{0.68}$ &  $\mathbf{0.33}$ &  $\mathbf{0.72}$ &  $\mathbf{0.38}$ &  $\mathbf{0.82}$ &  $\mathbf{0.35}$ &  $\mathbf{0.86}$ &  $\mathbf{0.30}$ \\
        & SDDM(RBF) &           $0.71$ &           $0.33$ &           $0.11$ &           $0.28$ &           $0.24$ &           $0.33$ &           $0.50$ &           $0.38$ &           $0.49$ &           $0.37$ &           $0.31$ &           $0.33$ &           $0.42$ &           $0.30$ &           $0.50$ &           $0.32$ &           $0.53$ &           $0.29$ \\
        & ShapeDDM &  $\mathbf{0.94}$ &  $\mathbf{0.12}$ &           $0.06$ &           $0.17$ &           $0.16$ &           $0.22$ &           $0.09$ &           $0.18$ &           $0.35$ &           $0.17$ &           $0.24$ &           $0.25$ &           $0.28$ &           $0.24$ &           $0.27$ &           $0.22$ &           $0.33$ &           $0.20$ \\
\hline\multirow{7}{*}{\rotatebox[origin=c]{90}{\parbox{2cm}{Nebraska\\Weather}}} & D3 &           $0.10$ &           $0.22$ &  $\mathbf{0.12}$ &  $\mathbf{0.17}$ &           $0.18$ &           $0.18$ &           $0.26$ &           $0.19$ &           $0.37$ &           $0.28$ &           $0.23$ &           $0.17$ &           $0.24$ &           $0.16$ &           $0.32$ &           $0.21$ &           $0.45$ &           $0.25$ \\
        & HDDDM &           $0.01$ &           $0.08$ &           $0.00$ &           $0.04$ &           $0.02$ &           $0.10$ &           $0.03$ &           $0.11$ &           $0.04$ &           $0.13$ &           $0.00$ &           $0.00$ &           $0.09$ &           $0.19$ &           $0.17$ &           $0.23$ &           $0.23$ &           $0.25$ \\
        & KSDD &           $0.08$ &           $0.03$ &           $0.07$ &           $0.06$ &           $0.11$ &           $0.05$ &           $0.13$ &           $0.05$ &           $0.14$ &           $0.04$ &           $0.10$ &           $0.04$ &           $0.12$ &           $0.04$ &           $0.13$ &           $0.03$ &           $0.14$ &           $0.03$ \\
        & MMDDDM &           $0.11$ &           $0.17$ &           $0.06$ &           $0.12$ &           $0.12$ &           $0.12$ &           $0.20$ &           $0.13$ &           $0.18$ &           $0.11$ &           $0.18$ &           $0.09$ &           $0.24$ &           $0.10$ &           $0.25$ &           $0.09$ &           $0.24$ &           $0.11$ \\
        & SDDM(MT) &  $\mathbf{0.82}$ &  $\mathbf{0.30}$ &  $\mathbf{0.14}$ &  $\mathbf{0.30}$ &  $\mathbf{0.49}$ &  $\mathbf{0.43}$ &  $\mathbf{0.63}$ &  $\mathbf{0.42}$ &  $\mathbf{0.64}$ &  $\mathbf{0.41}$ &  $\mathbf{0.66}$ &  $\mathbf{0.31}$ &  $\mathbf{0.91}$ &  $\mathbf{0.13}$ &  $\mathbf{0.91}$ &  $\mathbf{0.15}$ &  $\mathbf{0.91}$ &  $\mathbf{0.14}$ \\
        & SDDM(RBF) &           $0.09$ &           $0.27$ &           $0.03$ &           $0.16$ &           $0.10$ &           $0.26$ &           $0.33$ &           $0.39$ &           $0.40$ &           $0.43$ &           $0.49$ &           $0.39$ &           $0.80$ &           $0.21$ &  $\mathbf{0.87}$ &  $\mathbf{0.14}$ &  $\mathbf{0.87}$ &  $\mathbf{0.14}$ \\
        & ShapeDDM &           $0.66$ &           $0.44$ &           $0.01$ &           $0.07$ &           $0.03$ &           $0.12$ &           $0.12$ &           $0.21$ &           $0.19$ &           $0.24$ &           $0.17$ &           $0.23$ &           $0.18$ &           $0.24$ &           $0.13$ &           $0.22$ &           $0.10$ &           $0.20$ \\
\bottomrule
\end{longtable}
\pagebreak
\begin{longtable}{ll|l@{$\pm$}l@{\;\;}l@{$\pm$}l@{\;\;}l@{$\pm$}l@{\;\;}l@{$\pm$}l@{\;\;}l@{$\pm$}l@{\;\;}l@{$\pm$}l@{\;\;}l@{$\pm$}l@{\;\;}l@{$\pm$}l@{\;\;}l@{$\pm$}l}
\caption{Results over 150 runs. Table shows mean $\beta$-score ($\beta = 0.5$) and standard deviation. Numbers below stream types are B-length. Maximal delay is $12$.}
\label{tab:short}
\endfirsthead
\toprule
DS & Method & \multicolumn{2}{c}{AB} &  \multicolumn{8}{c}{ABA} & \multicolumn{8}{c}{ABC} \\
&& \multicolumn{2}{c}{--/12}&\multicolumn{2}{c}{25/12}&\multicolumn{2}{c}{50/12}&\multicolumn{2}{c}{75/12}&\multicolumn{2}{c}{100/12}&\multicolumn{2}{c}{25/12}&\multicolumn{2}{c}{50/12}&\multicolumn{2}{c}{75/12}&\multicolumn{2}{c}{100/12} \\
\midrule
\multirow{7}{*}{\rotatebox[origin=c]{90}{\parbox{2cm}{Electricity\\Market Prices}}}
 & D3 &           $0.11$ &           $0.25$ &           $0.09$ &           $0.16$ &           $0.10$ &           $0.16$ &           $0.12$ &           $0.17$ &           $0.14$ &           $0.20$ &           $0.29$ &           $0.15$ &           $0.22$ &           $0.18$ &           $0.20$ &           $0.17$ &           $0.30$ &           $0.28$ \\
        & HDDDM &           $0.18$ &           $0.38$ &           $0.00$ &           $0.00$ &           $0.01$ &           $0.06$ &           $0.00$ &           $0.04$ &           $0.01$ &           $0.06$ &           $0.18$ &           $0.23$ &           $0.11$ &           $0.20$ &           $0.10$ &           $0.20$ &           $0.10$ &           $0.20$ \\
        & KSDD &           $0.06$ &           $0.09$ &           $0.07$ &           $0.08$ &           $0.08$ &           $0.08$ &           $0.10$ &           $0.09$ &           $0.09$ &           $0.10$ &           $0.18$ &           $0.09$ &           $0.16$ &           $0.10$ &           $0.17$ &           $0.10$ &           $0.19$ &           $0.11$ \\
        & MMDDDM &           $0.09$ &           $0.10$ &           $0.02$ &           $0.06$ &           $0.06$ &           $0.10$ &           $0.05$ &           $0.08$ &           $0.05$ &           $0.08$ &           $0.17$ &           $0.08$ &           $0.19$ &           $0.11$ &           $0.18$ &           $0.10$ &           $0.14$ &           $0.09$ \\
        & SDDM(MT) &  $\mathbf{0.86}$ &  $\mathbf{0.30}$ &  $\mathbf{0.20}$ &  $\mathbf{0.36}$ &  $\mathbf{0.59}$ &  $\mathbf{0.41}$ &  $\mathbf{0.71}$ &  $\mathbf{0.36}$ &  $\mathbf{0.78}$ &  $\mathbf{0.29}$ &  $\mathbf{0.86}$ &  $\mathbf{0.22}$ &  $\mathbf{0.95}$ &  $\mathbf{0.11}$ &  $\mathbf{0.96}$ &  $\mathbf{0.09}$ &  $\mathbf{0.97}$ &  $\mathbf{0.09}$ \\
        & SDDM(RBF) &           $0.39$ &           $0.45$ &           $0.00$ &           $0.00$ &           $0.01$ &           $0.05$ &           $0.03$ &           $0.15$ &           $0.02$ &           $0.11$ &           $0.64$ &           $0.36$ &           $0.62$ &           $0.36$ &           $0.70$ &           $0.35$ &           $0.66$ &           $0.37$ \\
        & ShapeDDM &           $0.59$ &           $0.47$ &           $0.00$ &           $0.04$ &           $0.02$ &           $0.09$ &           $0.02$ &           $0.10$ &           $0.04$ &           $0.14$ &           $0.37$ &           $0.21$ &           $0.29$ &           $0.24$ &           $0.26$ &           $0.25$ &           $0.22$ &           $0.24$ \\
\hline\multirow{7}{*}{\rotatebox[origin=c]{90}{\parbox{2cm}{Forest\\Covertype}}} & D3 &           $0.40$ &           $0.29$ &           $0.21$ &           $0.18$ &           $0.22$ &           $0.17$ &           $0.24$ &           $0.15$ &           $0.56$ &           $0.27$ &           $0.29$ &           $0.15$ &           $0.26$ &           $0.16$ &           $0.28$ &           $0.15$ &           $0.41$ &           $0.30$ \\
        & HDDDM &           $0.16$ &           $0.31$ &           $0.03$ &           $0.11$ &           $0.04$ &           $0.13$ &           $0.05$ &           $0.15$ &           $0.03$ &           $0.12$ &           $0.17$ &           $0.21$ &           $0.14$ &           $0.21$ &           $0.10$ &           $0.19$ &           $0.08$ &           $0.17$ \\
        & KSDD &           $0.05$ &           $0.03$ &           $0.07$ &           $0.03$ &           $0.07$ &           $0.03$ &           $0.09$ &           $0.03$ &           $0.10$ &           $0.02$ &           $0.09$ &           $0.04$ &           $0.08$ &           $0.04$ &           $0.09$ &           $0.04$ &           $0.09$ &           $0.04$ \\
        & MMDDDM &           $0.15$ &           $0.09$ &           $0.11$ &           $0.12$ &           $0.10$ &           $0.11$ &           $0.19$ &           $0.09$ &           $0.19$ &           $0.10$ &           $0.19$ &           $0.13$ &           $0.16$ &           $0.09$ &           $0.20$ &           $0.08$ &           $0.22$ &           $0.10$ \\
        & SDDM(MT) &  $\mathbf{0.90}$ &  $\mathbf{0.19}$ &  $\mathbf{0.69}$ &  $\mathbf{0.32}$ &  $\mathbf{0.89}$ &  $\mathbf{0.13}$ &  $\mathbf{0.86}$ &  $\mathbf{0.14}$ &  $\mathbf{0.86}$ &  $\mathbf{0.15}$ &  $\mathbf{0.66}$ &  $\mathbf{0.24}$ &  $\mathbf{0.85}$ &  $\mathbf{0.22}$ &  $\mathbf{0.89}$ &  $\mathbf{0.17}$ &  $\mathbf{0.89}$ &  $\mathbf{0.18}$ \\
        & SDDM(RBF) &           $0.03$ &           $0.16$ &           $0.00$ &           $0.00$ &           $0.00$ &           $0.00$ &           $0.00$ &           $0.00$ &           $0.00$ &           $0.00$ &           $0.00$ &           $0.00$ &           $0.01$ &           $0.06$ &           $0.02$ &           $0.12$ &           $0.01$ &           $0.07$ \\
        & ShapeDDM &  $\mathbf{0.92}$ &  $\mathbf{0.25}$ &           $0.02$ &           $0.10$ &           $0.15$ &           $0.23$ &           $0.22$ &           $0.25$ &           $0.30$ &           $0.24$ &           $0.47$ &           $0.11$ &           $0.07$ &           $0.18$ &           $0.03$ &           $0.11$ &           $0.02$ &           $0.09$ \\
\hline\multirow{7}{*}{\rotatebox[origin=c]{90}{\parbox{2cm}{Random\\RBF}}} & D3 &           $0.22$ &           $0.30$ &           $0.12$ &           $0.17$ &           $0.13$ &           $0.18$ &           $0.20$ &           $0.17$ &           $0.23$ &           $0.26$ &           $0.15$ &           $0.16$ &           $0.24$ &           $0.18$ &           $0.21$ &           $0.18$ &           $0.32$ &           $0.29$ \\
        & HDDDM &           $0.07$ &           $0.21$ &           $0.01$ &           $0.08$ &           $0.01$ &           $0.07$ &           $0.01$ &           $0.07$ &           $0.04$ &           $0.13$ &           $0.11$ &           $0.17$ &           $0.02$ &           $0.10$ &           $0.03$ &           $0.11$ &           $0.03$ &           $0.12$ \\
        & KSDD &           $0.07$ &           $0.06$ &           $0.08$ &           $0.07$ &           $0.09$ &           $0.07$ &           $0.12$ &           $0.08$ &           $0.12$ &           $0.08$ &           $0.12$ &           $0.08$ &           $0.12$ &           $0.07$ &           $0.11$ &           $0.07$ &           $0.13$ &           $0.07$ \\
        & MMDDDM &           $0.19$ &           $0.11$ &           $0.10$ &           $0.13$ &           $0.15$ &           $0.09$ &           $0.19$ &           $0.11$ &           $0.23$ &           $0.11$ &           $0.23$ &           $0.12$ &           $0.18$ &           $0.08$ &           $0.21$ &           $0.10$ &           $0.20$ &           $0.09$ \\
        & SDDM(MT) &  $\mathbf{0.92}$ &  $\mathbf{0.18}$ &           $0.20$ &           $0.35$ &  $\mathbf{0.74}$ &  $\mathbf{0.32}$ &  $\mathbf{0.86}$ &  $\mathbf{0.21}$ &  $\mathbf{0.89}$ &  $\mathbf{0.19}$ &  $\mathbf{0.56}$ &  $\mathbf{0.22}$ &  $\mathbf{0.80}$ &  $\mathbf{0.27}$ &  $\mathbf{0.86}$ &  $\mathbf{0.22}$ &  $\mathbf{0.88}$ &  $\mathbf{0.20}$ \\
        & SDDM(RBF) &           $0.06$ &           $0.22$ &           $0.00$ &           $0.00$ &           $0.01$ &           $0.08$ &           $0.06$ &           $0.21$ &           $0.09$ &           $0.26$ &           $0.01$ &           $0.07$ &           $0.12$ &           $0.25$ &           $0.15$ &           $0.28$ &           $0.20$ &           $0.34$ \\
        & ShapeDDM &  $\mathbf{0.93}$ &  $\mathbf{0.21}$ &           $0.01$ &           $0.08$ &           $0.08$ &           $0.18$ &           $0.16$ &           $0.23$ &           $0.27$ &           $0.24$ &           $0.48$ &           $0.09$ &           $0.17$ &           $0.23$ &           $0.13$ &           $0.22$ &           $0.09$ &           $0.19$ \\
\pagebreak
\hline\multirow{7}{*}{\rotatebox[origin=c]{90}{\parbox{2cm}{Rotating\\Hyperplane}}} & D3 &           $0.06$ &           $0.19$ &           $0.03$ &           $0.11$ &           $0.01$ &           $0.07$ &           $0.04$ &           $0.12$ &           $0.02$ &           $0.08$ &           $0.08$ &           $0.16$ &           $0.07$ &           $0.15$ &           $0.07$ &           $0.16$ &           $0.10$ &           $0.18$ \\
        & HDDDM &           $0.01$ &           $0.12$ &           $0.01$ &           $0.07$ &           $0.00$ &           $0.00$ &           $0.01$ &           $0.08$ &           $0.01$ &           $0.07$ &           $0.00$ &           $0.04$ &           $0.02$ &           $0.09$ &           $0.02$ &           $0.10$ &           $0.01$ &           $0.06$ \\
        & KSDD &           $0.02$ &           $0.07$ &           $0.06$ &           $0.10$ &           $0.04$ &           $0.09$ &           $0.04$ &           $0.09$ &  $\mathbf{0.05}$ &  $\mathbf{0.09}$ &           $0.04$ &           $0.08$ &           $0.04$ &           $0.08$ &           $0.05$ &           $0.09$ &           $0.03$ &           $0.08$ \\
        & MMDDDM &           $0.17$ &           $0.09$ &           $0.03$ &           $0.09$ &           $0.04$ &           $0.09$ &           $0.04$ &           $0.11$ &           $0.02$ &           $0.06$ &           $0.17$ &           $0.08$ &           $0.19$ &           $0.10$ &           $0.20$ &           $0.08$ &           $0.22$ &           $0.08$ \\
        & SDDM(MT) &           $0.24$ &           $0.41$ &           $0.00$ &           $0.04$ &           $0.00$ &           $0.00$ &           $0.00$ &           $0.03$ &           $0.01$ &           $0.06$ &           $0.18$ &           $0.23$ &           $0.17$ &           $0.24$ &           $0.15$ &           $0.24$ &           $0.18$ &           $0.24$ \\
        & SDDM(RBF) &           $0.75$ &           $0.25$ &           $0.00$ &           $0.00$ &           $0.00$ &           $0.00$ &           $0.00$ &           $0.00$ &           $0.00$ &           $0.00$ &  $\mathbf{0.70}$ &  $\mathbf{0.29}$ &  $\mathbf{0.69}$ &  $\mathbf{0.27}$ &  $\mathbf{0.67}$ &  $\mathbf{0.27}$ &  $\mathbf{0.65}$ &  $\mathbf{0.31}$ \\
        & ShapeDDM &  $\mathbf{0.91}$ &  $\mathbf{0.17}$ &           $0.00$ &           $0.04$ &           $0.01$ &           $0.08$ &           $0.01$ &           $0.06$ &           $0.01$ &           $0.08$ &           $0.43$ &           $0.15$ &           $0.02$ &           $0.11$ &           $0.00$ &           $0.04$ &           $0.00$ &           $0.04$ \\
\hline\multirow{7}{*}{\rotatebox[origin=c]{90}{\parbox{2cm}{STAGGER}}} & D3 &           $0.19$ &           $0.29$ &           $0.17$ &           $0.18$ &           $0.16$ &           $0.19$ &           $0.20$ &           $0.17$ &           $0.27$ &           $0.28$ &           $0.18$ &           $0.17$ &           $0.21$ &           $0.20$ &           $0.24$ &           $0.19$ &           $0.24$ &           $0.25$ \\
        & HDDDM &           $0.14$ &           $0.28$ &           $0.01$ &           $0.08$ &           $0.02$ &           $0.08$ &           $0.07$ &           $0.17$ &           $0.04$ &           $0.13$ &           $0.06$ &           $0.14$ &           $0.09$ &           $0.17$ &           $0.04$ &           $0.13$ &           $0.07$ &           $0.17$ \\
        & KSDD &           $0.22$ &           $0.28$ &           $0.14$ &           $0.17$ &           $0.12$ &           $0.16$ &           $0.18$ &           $0.20$ &           $0.22$ &           $0.24$ &           $0.16$ &           $0.17$ &           $0.17$ &           $0.20$ &           $0.17$ &           $0.19$ &           $0.19$ &           $0.21$ \\
        & MMDDDM &           $0.16$ &           $0.09$ &           $0.16$ &           $0.15$ &           $0.16$ &           $0.11$ &           $0.15$ &           $0.09$ &           $0.15$ &           $0.08$ &           $0.19$ &           $0.10$ &           $0.12$ &           $0.08$ &           $0.18$ &           $0.11$ &           $0.14$ &           $0.07$ \\
        & SDDM(MT) &  $\mathbf{0.87}$ &  $\mathbf{0.32}$ &  $\mathbf{0.46}$ &  $\mathbf{0.47}$ &  $\mathbf{0.64}$ &  $\mathbf{0.43}$ &  $\mathbf{0.72}$ &  $\mathbf{0.42}$ &  $\mathbf{0.65}$ &  $\mathbf{0.46}$ &  $\mathbf{0.68}$ &  $\mathbf{0.33}$ &  $\mathbf{0.69}$ &  $\mathbf{0.38}$ &  $\mathbf{0.79}$ &  $\mathbf{0.36}$ &  $\mathbf{0.82}$ &  $\mathbf{0.33}$ \\
        & SDDM(RBF) &           $0.71$ &           $0.34$ &           $0.11$ &           $0.28$ &           $0.21$ &           $0.29$ &           $0.45$ &           $0.39$ &           $0.46$ &           $0.38$ &           $0.31$ &           $0.33$ &           $0.36$ &           $0.28$ &           $0.48$ &           $0.33$ &           $0.45$ &           $0.29$ \\
        & ShapeDDM &  $\mathbf{0.92}$ &  $\mathbf{0.20}$ &           $0.06$ &           $0.17$ &           $0.09$ &           $0.18$ &           $0.04$ &           $0.14$ &           $0.27$ &           $0.21$ &           $0.24$ &           $0.25$ &           $0.16$ &           $0.23$ &           $0.15$ &           $0.21$ &           $0.22$ &           $0.23$ \\
\hline\multirow{7}{*}{\rotatebox[origin=c]{90}{\parbox{2cm}{Nebraska Weather}}} & D3 &           $0.03$ &           $0.14$ &  $\mathbf{0.12}$ &  $\mathbf{0.17}$ &           $0.14$ &           $0.18$ &           $0.15$ &           $0.17$ &           $0.21$ &           $0.26$ &           $0.23$ &           $0.17$ &           $0.18$ &           $0.17$ &           $0.21$ &           $0.16$ &           $0.30$ &           $0.28$ \\
        & HDDDM &           $0.01$ &           $0.08$ &           $0.00$ &           $0.04$ &           $0.01$ &           $0.07$ &           $0.01$ &           $0.07$ &           $0.01$ &           $0.08$ &           $0.00$ &           $0.00$ &           $0.05$ &           $0.15$ &           $0.03$ &           $0.12$ &           $0.03$ &           $0.12$ \\
        & KSDD &           $0.04$ &           $0.05$ &           $0.07$ &           $0.06$ &           $0.07$ &           $0.06$ &           $0.08$ &           $0.06$ &           $0.09$ &           $0.06$ &           $0.10$ &           $0.04$ &           $0.09$ &           $0.04$ &           $0.10$ &           $0.05$ &           $0.11$ &           $0.04$ \\
        & MMDDDM &           $0.02$ &           $0.07$ &           $0.06$ &           $0.12$ &           $0.06$ &           $0.07$ &           $0.06$ &           $0.08$ &           $0.09$ &           $0.09$ &           $0.18$ &           $0.09$ &           $0.17$ &           $0.10$ &           $0.17$ &           $0.10$ &           $0.19$ &           $0.11$ \\
        & SDDM(MT) &  $\mathbf{0.69}$ &  $\mathbf{0.41}$ &  $\mathbf{0.14}$ &  $\mathbf{0.30}$ &  $\mathbf{0.47}$ &  $\mathbf{0.43}$ &  $\mathbf{0.57}$ &  $\mathbf{0.42}$ &  $\mathbf{0.54}$ &  $\mathbf{0.41}$ &  $\mathbf{0.66}$ &  $\mathbf{0.31}$ &  $\mathbf{0.88}$ &  $\mathbf{0.18}$ &  $\mathbf{0.87}$ &  $\mathbf{0.20}$ &  $\mathbf{0.85}$ &  $\mathbf{0.23}$ \\
        & SDDM(RBF) &           $0.06$ &           $0.22$ &           $0.03$ &           $0.16$ &           $0.10$ &           $0.25$ &           $0.29$ &           $0.37$ &           $0.33$ &           $0.39$ &           $0.49$ &           $0.39$ &           $0.78$ &           $0.23$ &  $\mathbf{0.84}$ &  $\mathbf{0.17}$ &  $\mathbf{0.82}$ &  $\mathbf{0.21}$ \\
        & ShapeDDM &           $0.33$ &           $0.46$ &           $0.01$ &           $0.07$ &           $0.01$ &           $0.08$ &           $0.06$ &           $0.15$ &           $0.07$ &           $0.17$ &           $0.17$ &           $0.23$ &           $0.07$ &           $0.17$ &           $0.06$ &           $0.16$ &           $0.06$ &           $0.16$ \\
\bottomrule
\end{longtable}
\pagebreak
\end{landscape}

\end{toappendix}

As can be seen, SDDM (MT) outperforms all other methods on nearly all datasets. In particular, in the case of multiple concepts (ABA, ABC) SDDM performs significantly better then any other method. An exception is ShapeDDM which performs quite well in the case AB. Furthermore, SDDM (RBF) also shows suitable performance on some datasets. This implies that the spectral method itself produces decent performance but requires a sufficient kernel to work properly. Furthermore, SDDM (MT and RBF) and ShapeDDM produce relatively few alarms (in median 1 or less false alarms per run), in particular, in cases where they perform not that good. This is in strong contrast to other methods like MMDDDM or KS-Win which usually produced in median 10-15 false alarms (and up to 30 in median).  

\section{Conclusion and Future Work}

In this contribution we considered the shape and structure of drift induced signals, obtained by MMD. We gave a complete description, in case the signals are induced by sliding windows, by giving a direct connection to the rate of change in terms of derivatives. We also showed that this problem is closely related to the eigenvectors of the kernel matrix. We used this insight to construct a new drift detection method SDDM which is based on a segmentation of the eigenvectors. We compared SDDM to several other unsupervised drift detection methods on several datasets and showed it superiority. We also introduced the concept of the MomentTree-kernel, which showed high potential in context of several dataset.  

So far, our method is limited to abrupt drift, we hope to extend it to gradual drift in the future. 

\bibliographystyle{abbrv}
\bibliography{bib}

\begin{thebibliography}{10}

\bibitem{Aminikhanghahi:2017:SMT:3086013.3086037}
S.~Aminikhanghahi and D.~J. Cook.
\newblock A survey of methods for time series change point detection.
\newblock {\em Knowl. Inf. Syst.}, 51(2):339--367, May 2017.

\bibitem{DBLP:journals/adt/BifetG20}
A.~Bifet and J.~Gama.
\newblock Iot data stream analytics.
\newblock {\em Ann. des T{\'{e}}l{\'{e}}comm.}, 75(9-10), 2020.

\bibitem{forestcovertypedataset}
J.~A. Blackard, D.~J. Dean, and C.~W. Anderson.
\newblock Covertype data set, 1998.

\bibitem{breiman2003random}
L.~Breiman and A.~Cutler.
\newblock Random forests manual v4.
\newblock In {\em Technical report}. UC Berkel, 2003.

\bibitem{kdq}
T.~Dasu, S.~Krishnan, S.~Venkatasubramanian, and K.~Yi.
\newblock An information-theoretic approach to detecting changes in
  multidimensional data streams.
\newblock {\em Interfaces}, 01 2006.

\bibitem{HDDM}
G.~Ditzler and R.~Polikar.
\newblock Hellinger distance based drift detection for nonstationary
  environments.
\newblock In {\em 2011 {IEEE} Symposium on Computational Intelligence in
  Dynamic and Uncertain Environments, {CIDUE} 2011, Paris, France, April 13,
  2011}, pages 41--48, 2011.

\bibitem{DBLP:journals/cim/DitzlerRAP15}
G.~Ditzler, M.~Roveri, C.~Alippi, and R.~Polikar.
\newblock Learning in nonstationary environments: {A} survey.
\newblock {\em {IEEE} Comp. Int. Mag.}, 10(4):12--25, 2015.

\bibitem{weatherdataset}
R.~{Elwell} and R.~{Polikar}.
\newblock Incremental learning of concept drift in nonstationary environments.
\newblock {\em IEEE Transactions on Neural Networks}, 22(10):1517--1531, Oct
  2011.

\bibitem{LearningWithDriftDetection}
J.~Gama, P.~Medas, G.~Castillo, and P.~Rodrigues.
\newblock Learning with drift detection.
\newblock volume~8, pages 286--295, 09 2004.

\bibitem{asurveyonconceptdriftadaption}
J.~a. Gama, I.~\v{Z}liobait\.{e}, A.~Bifet, M.~Pechenizkiy, and A.~Bouchachia.
\newblock A survey on concept drift adaptation.
\newblock {\em ACM Comput. Surv.}, 46(4):44:1--44:37, Mar. 2014.

\bibitem{DBLP:journals/kais/GoldenbergW19}
I.~Goldenberg and G.~I. Webb.
\newblock Survey of distance measures for quantifying concept drift and shift
  in numeric data.
\newblock {\em Knowl. Inf. Syst.}, 60(2):591--615, 2019.

\bibitem{D3}
{\"O}.~G{\"o}z{\"u}a{\c{c}}{\i}k, A.~B{\"u}y{\"u}k{\c{c}}ak{\i}r, H.~Bonab, and
  F.~Can.
\newblock Unsupervised concept drift detection with a discriminative
  classifier.
\newblock In {\em Proceedings of the 28th ACM international conference on
  information and knowledge management}, pages 2365--2368, 2019.

\bibitem{MMD}
A.~Gretton, K.~M. Borgwardt, M.~J. Rasch, B.~Sch{\"{o}}lkopf, and A.~J. Smola.
\newblock A kernel method for the two-sample-problem.
\newblock In {\em Advances in Neural Information Processing Systems 19,
  Proceedings of the Twentieth Annual Conference on Neural Information
  Processing Systems, Vancouver, British Columbia, Canada, December 4-7, 2006},
  pages 513--520, 2006.

\bibitem{5376}
A.~Gretton, A.~Smola, J.~Huang, M.~Schmittfull, K.~Borgwardt, and
  B.~Sch{\"o}lkopf.
\newblock {\em Covariate shift and local learning by distribution matching},
  pages 131--160.
\newblock MIT Press, Cambridge, MA, USA, 2009.

\bibitem{electricitymarketdata}
M.~Harries, U.~N. cse tr, and N.~S. Wales.
\newblock Splice-2 comparative evaluation: Electricity pricing.
\newblock Technical report, 1999.

\bibitem{SSCI}
F.~Hinder, J.~Brinkrolf, V.~Vaquet, and B.~Hammer.
\newblock A shape-based method for concept drift detection and signal
  denoising.
\newblock In {\em 2021 IEEE Symposium Series on Computational Intelligence
  (SSCI)}, pages 01--08, 2021.

\bibitem{esann}
F.~Hinder, B.~Hammer, and M.~Verleysen.
\newblock Concept drift segmentation via kolmogorov trees.
\newblock In {\em Proceedings of the ESANN, 29th European Symposium on
  Artificial Neural Networks, Computational Intelligence and Machine Learning},
  2021.

\bibitem{MomentTree}
F.~Hinder, V.~Vaquet, J.~Brinkrolf, and B.~Hammer.
\newblock Fast non-parametric conditional density estimation using moment
  trees.
\newblock In {\em 2021 IEEE Symposium Series on Computational Intelligence
  (SSCI)}, pages 1--7, 2021.

\bibitem{ida}
F.~Hinder, V.~Vaquet, and B.~Hammer.
\newblock Suitability of different metric choices for concept drift detection.
\newblock {\em arXiv preprint arXiv:2202.09486}, 2022.

\bibitem{DBLP:journals/inffus/KrawczykMGSW17}
B.~Krawczyk, L.~L. Minku, J.~Gama, J.~Stefanowski, and M.~Wozniak.
\newblock Ensemble learning for data stream analysis: {A} survey.
\newblock {\em Inf. Fusion}, 37, 2017.

\bibitem{Lu_2018}
J.~Lu, A.~Liu, F.~Dong, F.~Gu, J.~Gama, and G.~Zhang.
\newblock Learning under concept drift: A review.
\newblock {\em IEEE Transactions on Knowledge and Data Engineering}, page
  1–1, 2018.

\bibitem{SpectralClustering}
U.~V. Luxburg.
\newblock A tutorial on spectral clustering, 2007.

\bibitem{skmultiflow}
J.~Montiel, J.~Read, A.~Bifet, and T.~Abdessalem.
\newblock Scikit-multiflow: A multi-output streaming framework.
\newblock {\em Journal of Machine Learning Research}, 19(72):1--5, 2018.

\bibitem{KSWIN}
C.~Raab, M.~Heusinger, and F.-M. Schleif.
\newblock Reactive soft prototype computing for concept drift streams.
\newblock {\em Neurocomputing}, 416:340--351, 2020.

\bibitem{fail}
S.~Rabanser, S.~G\"{u}nnemann, and Z.~Lipton.
\newblock Failing loudly: An empirical study of methods for detecting dataset
  shift.
\newblock In H.~Wallach, H.~Larochelle, A.~Beygelzimer, F.~d\textquotesingle
  Alch\'{e}-Buc, E.~Fox, and R.~Garnett, editors, {\em Advances in Neural
  Information Processing Systems}, volume~32. Curran Associates, Inc., 2019.

\bibitem{DBLP:journals/widm/TabassumPFG18a}
S.~Tabassum, F.~S.~F. Pereira, S.~Fernandes, and J.~Gama.
\newblock Social network analysis: An overview.
\newblock {\em Wiley Interdiscip. Rev. Data Min. Knowl. Discov.}, 8(5), 2018.

\bibitem{DBLP:journals/corr/WebbLPG17}
G.~I. Webb, L.~K. Lee, F.~Petitjean, and B.~Goethals.
\newblock Understanding concept drift.
\newblock {\em CoRR}, abs/1704.00362, 2017.

\end{thebibliography}


\end{document}